\documentclass[journal]{IEEEtran}
\usepackage{graphicx}
\usepackage{fancyhdr, ifpdf}
\usepackage{amssymb}
\usepackage{amsmath}
\usepackage{amsthm}
\usepackage{mathrsfs}
\usepackage{color}
\usepackage{enumerate}
\usepackage{graphics}
\usepackage{epsfig}
\usepackage{psfrag}
\usepackage{amsfonts}
\usepackage{mathrsfs}
\usepackage{verbatim}
\usepackage{cite}
\usepackage[numbers,sort&compress]{natbib}
\usepackage{color}
\usepackage{booktabs}
\usepackage{epstopdf}
\usepackage[tight,footnotesize]{subfigure}
\usepackage{algorithm}
\usepackage{algorithmicx}
\usepackage{algpseudocode}
\usepackage{subfigure}
\usepackage{multicol}
\usepackage{multirow}
\usepackage{bbding}
\usepackage{caption}
\usepackage{lineno,hyperref}
\usepackage{paralist}
\usepackage[OT1]{fontenc}
\usepackage{indentfirst}
\setlength{\parindent}{2em}

\ifCLASSOPTIONcompsoc
  \usepackage[nocompress]{cite}
\else
  \usepackage{cite}
\fi
\hyphenation{op-tical net-works semi-conduc-tor}

\begin{document}

\title{Anchor Graph Structure Fusion Hashing for Cross-Modal Similarity Search}

\author{Lu Wang, Jie Yang$^*$, Masoumeh Zareapoor, Zhonglong Zheng$^*$

\thanks{$*$Corresponding author: Jie Yang, Zhonglong Zheng (jieyang@sjtu.edu.cn; zhonglong@zjnu.cn). }
\thanks{L. Wang (luwang\_16@sjtu.edu.cn),  J. Yang, M. Zareapoor are with the Institute of Image Processing and Pattern Recognition, Department of Automation, Shanghai Jiao Tong University, Shanghai, China, 201100. Z. Zheng is with Zhejiang Normal University, China.}}

\markboth{}
{Lu \MakeLowercase{\textit{et al.}}: Anchor Graph Structure Fusion Hashing for Cross-Modal Similarity Search}
\maketitle

\begin{abstract}
Cross-modal hashing has been widely applied to retrieve items across modalities due to its superiority in fast computation and low storage. However, some challenges are still needed to address: (1) most existing CMH methods take graphs, which are always predefined separately in each modality, as input to model data distribution. These methods omit to consider the correlation of graph structure among multiple modalities. Besides, cross-modal retrieval results highly rely on the quality of predefined affinity graphs; (2) most existing CMH methods deal with the preservation of intra- and inter-modal affinity independently to learn the binary codes, which ignores considering the fusion affinity among multi-modalities data; (3) most existing CMH methods relax the discrete constraints to solve the optimization objective, which could significantly degrade the retrieval performance. To solve the above limitations, in this paper, we propose a novel Anchor Graph Structure Fusion Hashing (AGSFH). AGSFH constructs the anchor graph structure fusion matrix from different anchor graphs of multiple modalities with the Hadamard product, which can fully exploit the geometric property of underlying data structure across multiple modalities. Specifically, based on the anchor graph structure fusion matrix, AGSFH makes an attempt to directly learn an intrinsic anchor graph, where the structure of the intrinsic anchor graph is adaptively tuned so that the number of components of the intrinsic graph is exactly equal to the number of clusters. Based on this process, training instances can be clustered into semantic space. Besides, AGSFH preserves the anchor fusion affinity into the common binary Hamming space, capturing intrinsic similarity and structure across modalities by hash codes. Furthermore, a discrete optimization framework is designed to learn the unified binary codes across modalities. Extensive experimental results on three public social datasets demonstrate the superiority of AGSFH in cross-modal retrieval.
\end{abstract}

\begin{IEEEkeywords}
Cross-modal retrieval, Anchor graph structure fusion, Anchor graph learning, Cross-modal hashing
\end{IEEEkeywords}

\IEEEpeerreviewmaketitle

\section{Introduction}
\IEEEPARstart{I}{n} the wake of the rapid growth of multiple modalities data (e.g., image, text, and video) on social media and the internet, cross-modal information retrieval have obtained much attention for storing and searching items in the large-scale data environment. Hashing based methods \cite{proceeding7, proceeding10, proceeding27, jour13, yan2020deep, yan2021task, yan2021precise, yan2021age, ma2018global, ma2020discriminative, ma2021learning, ma2020correlation, ma2017manifold, ma2017learning} has shown their superiority in the approximate nearest neighbor retrieval task because of their low storage consumption and fast search speed. Its superiority is achieved by compact binary codes representation, which targets constructing the compact binary representation in Hamming space to preserve the semantic affinity information across modalities as much as possible.

By whether the label information is utilized, cross-modal hashing (CMH) methods can be categorized into two subclasses: unsupervised \cite{proceeding17, proceeding14, proceeding15, proceeding16, jour5, jour23} and supervised \cite{proceeding18, proceeding19, proceeding20, proceeding21, proceeding22} methods. The details are shown in Section \ref{Sec2}. Besides, most existing cross-modal hashing methods separately preserve the intra- and inter-modal affinities to learn the corresponding binary codes, which omit to consider the fusion semantic affinity among multi-modalities data. However, such fusion affinity is the key to capturing the cross-modal semantic similarity since different modalities' data could complement each other. Recently, some works have considered utilizing multiple similarities \cite{proceeding35, jour24}. Few works have been proposed to capture such fusion semantic similarity with binary codes. The symbolic achievement is Fusion Similarity Hashing (FSH) \cite{proceeding17}. FSH builds an asymmetrical fusion graph to capture the intrinsic relations across heterogeneous data and directly preserves the fusion similarity to a Hamming space.

However, there are some limitations to be further addressed. Firstly, most existing CMH methods take graphs, which are always predefined separately in each modality, as input to model the distribution of data. These methods omit to consider the correlation of graph structure among multiple modalities, and cross-modal retrieval results highly rely on the quality of predefined affinity graphs \cite{proceeding17, proceeding14, proceeding16}. Secondly, most existing CMH methods deal with the preservation of intra- and inter-modal affinity separately to learn the binary code, omitting to consider the fusion affinity among multi-modalities data containing complementary information \cite{proceeding14, proceeding15, proceeding16}. Thirdly, most existing CMH methods relax the discrete constraints to solve the optimization objective, which could significantly degrade the retrieval performance \cite{proceeding14, proceeding18, proceeding19}. Besides, there are also some works \cite{quan2016object, yan2020depth} that try to adaptively learn the optimal graph affinity matrix from the data information and the learning tasks. Graph Optimized-Flexible Manifold Ranking (GO-FMR) \cite{quan2016object} obtains the optimal affinity matrix via direct $L_2$ regression under the guidance of the human established affinity matrix to infer the final predict labels more accurately. In our model, we learn an intrinsic anchor graph also via direct $L_2$ regression like \cite{quan2016object} based on the constructed anchor graph structure fusion matrix, but it also simultaneously tunes the structure of the intrinsic anchor graph to make the learned graph has exactly $C$ connected components. Hence, vertices in each connected component of the graph can be categorized into one cluster. In addition, GO-FMR needs $O(N^2)$ in computational complexity to learn the optimal affinity matrix ($N$ is the number of the vertices in the graph). With the increasing amount of data, it is intractable for off-line learning. Compared with GO-FMR, the proposed method only needs $O(PN)$ in computational complexity to learn the optimal anchor affinity matrix to construct the corresponding graph ($P$ is the number of the anchors in the graph and $P \ll N$). As a result, the storage space and the computational complexity for learning the graph could be decreased.

\captionsetup{labelfont=it, textfont={bf,it}}
\begin{figure*}[t]
\centering
\includegraphics[width=0.8\textwidth]{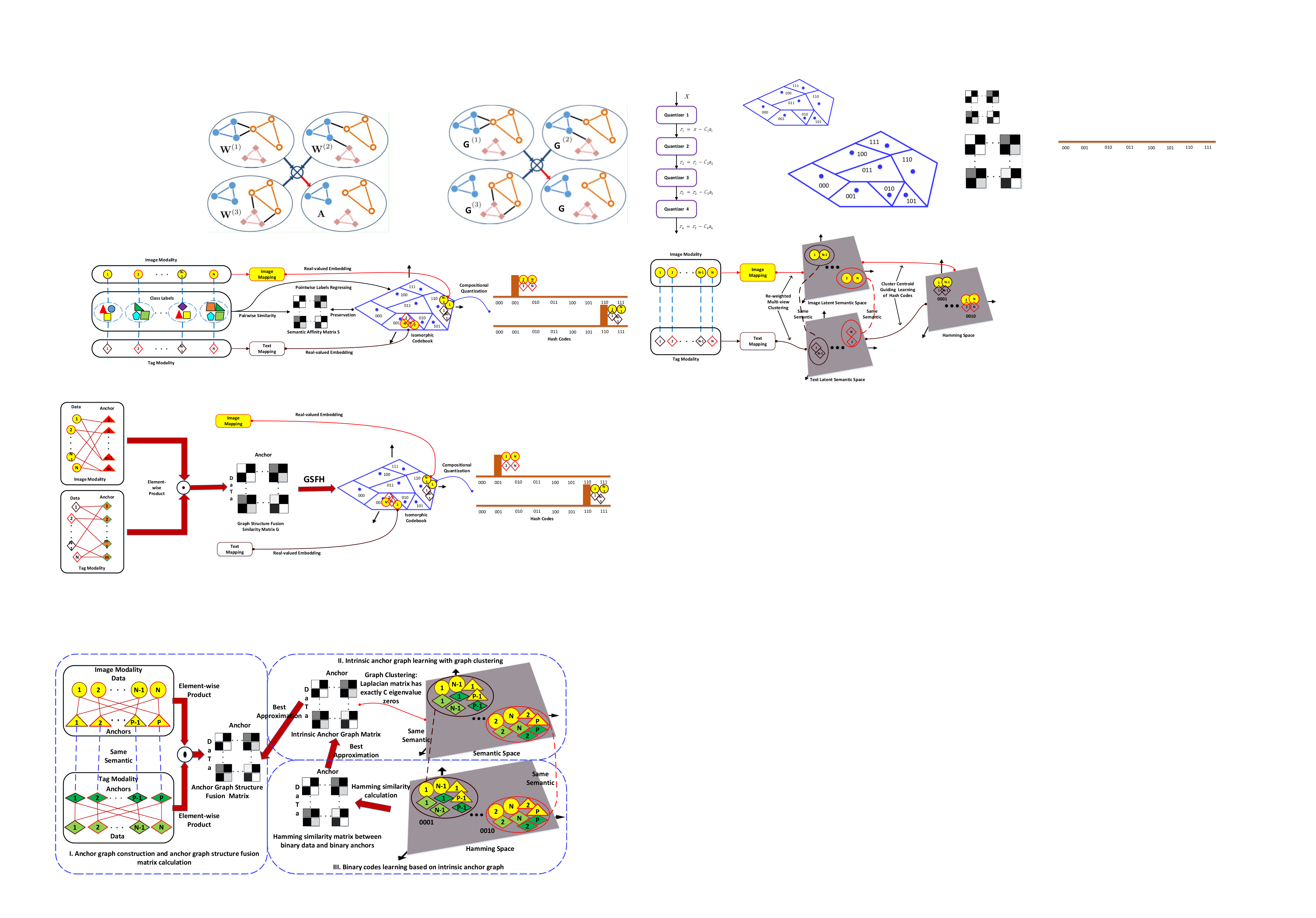}
\caption{The flowchart of AGFSH. We utilize bi-modal data as an example. Firstly, in part I, we construct anchor graphs for image and text modalities and calculate the anchor graph structure fusion matrix by the Hadamard product. Secondly, part II shows intrinsic anchor graph learning with graph clustering by best approximating the anchor graph structure fusion matrix. Based on this process, training instances can be clustered into semantic space. Thirdly, in part III, binary code learning based on intrinsic anchor graph is realized by best approximating Hamming similarity matrix into intrinsic anchor graph. This process can guarantee that binary data could preserve the semantic relationship of training instances in semantic space.}
\label{figure1}
\end{figure*}

To address the above limitations, in this paper, we propose Anchor Graph Structure Fusion Hashing (AGSFH) method to solve effective and efficient large-scale information retrieval across modalities. The flowchart of the proposed model is shown in Fig. \ref{figure1}. By constructing the anchor graph structure fusion matrix from different anchor graphs of multiple modalities with the Hadamard product, AGSFH can fully exploit the geometric property of underlying data structure, leading to the robustness to noise in multi-modal relevance among data instances. The key idea of AGSFH is attempting to directly preserve the fusion anchor affinity with complementary information among multi-modalities data into the common binary Hamming space. At the same time, the structure of the intrinsic anchor graph is adaptively tuned by a well-designed objective function so that the number of components of the intrinsic graph is exactly equal to the number of clusters. Based on this process, training instances can be clustered into semantic space, and binary data could preserve the semantic relationship of training instances. Besides, a discrete optimization framework is designed to learn the unified binary codes across modalities. Extensive experimental results on three public social datasets demonstrate the superiority of AGSFH in cross-modal retrieval. We highlight the main contributions of AGSFH as below.

\begin{enumerate}[1.]
\item We develop an anchor graph structure fusion affinity for large-scale data to directly preserve the anchor fusion affinity with complementary information among multi-modalities data into the common binary Hamming space. The fusion affinity can result in robustness to noise in multi-modal relevance and excellent performance improvement on cross-modal information retrieval.
\item The structure of the intrinsic anchor graph is learned by preserving similarity across modalities and adaptively tuning clustering structure in a unified framework. The learned intrinsic anchor graph can fully exploit the geometric property of underlying data structure across multiple modalities and directly cluster training instances in semantic space.
\item An alternative algorithm is designed to solve the optimization problem in AGSFH. Based on the algorithm, the binary codes are learned without relaxation, avoiding the large quantization error.
\item Extensive experimental results on three public social datasets demonstrate the superiority of AGSFH in cross-modal retrieval.
\end{enumerate}

\section{Related works} \label{Sec2}
Cross-modal hashing has obtained much attention due to its utility and efficiency. Current CMH methods are mainly divided into two categories: unsupervised ones and supervised ones.

Unsupervised CMH methods \cite{proceeding17, proceeding14, proceeding15, proceeding16, jour5, jour23} mainly map the heterogeneous multi-modal data instances into the common compact binary codes by preserving the intra- and inter-modal relevance of training data and learn the hash functions during this process. Fusion Similarity Hashing (FSH) \cite{proceeding17} learns binary hash codes by preserving the fusion similarity of multiple modalities in an undirected asymmetric graph. Collaborative Subspace Graph Hashing (CSGH) \cite{proceeding14} constructs the unified hash codes by a two-stage collaborative learning framework. Joint Coupled-Hashing Representation (JCHR) \cite{proceeding15} learns the unified hash codes via embedding heterogeneous data into their corresponding binary spaces. Hypergraph-based Discrete Hashing (BGDH) \cite{proceeding16} obtains the binary codes with learning hypergraph and binary codes simultaneously. Robust and Flexible Discrete Hashing (RFDH) \cite{jour5} directly produces the hash codes via discrete matrix decomposition. Joint and individual matrix factorization hashing (JIMFH) \cite{jour23} learns unified hash codes with joint matrix factorization and individual hash codes with individual matrix factorization.

Different from unsupervised ones, CVH \cite{proceeding18}, SCM \cite{proceeding19}, SePH \cite{proceeding20}, FDCH \cite{proceeding21}, ADCH \cite{proceeding22} are representative supervised cross-modal hashing methods. Cross View Hashing (CVH) extends the single-modal spectral hashing to multiple modalities and relaxes the minimization problem for learning the hash codes \cite{proceeding18}. Semantic Correlation Maximization (SCM) learns the hash functions by approximating the semantic affinity of label information in large-scale data \cite{proceeding19}. Semantics-Preserving Hashing (SePH) builds a probability distribution by the semantic similarity of data and minimizes the Kullback-Leibler divergence for getting the hash binary codes \cite{proceeding20}. Fast Discrete Cross-modal Hashing (FDCH) learns the hash codes and hash functions with regressing from class labels to binary codes \cite{proceeding21}. Asymmetric Discrete Cross-modal Hashing (ADCH) obtains the common latent representations across the modalities by the collective matrix factorization technique to learn the hash codes and constructs hash functions by a series of binary classifiers \cite{proceeding22}.

Besides, some deep CMH models are developed using deep end-to-end architecture because of the superiority of extracting semantic information in data points. Some representative works are Deep Cross-Modal Hashing (DCMH) \cite{proceeding23}, Dual deep neural networks cross-modal hashing (DDCMH) \cite{proceeding24}, Deep Binary Reconstruction (DBRC) \cite{jour14}. Deep Cross-Modal Hashing (DCMH) learns features and hash codes in the same framework with deep neural networks, one for each modality, to perform feature learning from scratch \cite{proceeding23}. Dual deep neural networks cross-modal hashing (DDCMH) generates hash codes for different modalities by two deep networks, making full use of inter-modal information to obtain high-quality binary codes \cite{proceeding24}. Deep Binary Reconstruction (DBRC) simultaneously learns the correlation across modalities and the binary hashing codes, which also proposes both linear and nonlinear scaling methods to generate efficient codes after training the network \cite{jour14}. In comparison with the traditional hashing methods, deep cross-modal models have shown outstanding performance.

\section{The proposed method}
In this section, we will give a detailed introduction to our proposed AGSFH.

\subsection{Notations and Definitions}
Our AGSFH work belongs to unsupervised hashing approach, achieving state-of-the-art performance in cross-modal semantic similarity search. There are $N$ instances $O = \left\{o_1, o_2, \ldots, o_N\right\}$ in the database set and each instance $o_i = (x_i^1, x_i^2, \ldots, x_i^M)$ has $M$ feature vectors from $M$ modalities respectively. The database matrix $X^m=\left[x_1^m, x_2^m, \ldots , x_N^m \right]\in R^{d_m \times N}$ denotes the feature representations for the $m$th modality, and the feature vector $x_i^m$ is the $i$th data of $X^m$ with $d_m$ dimension. Besides, we assume that the training data set $O$ has $C$ clusters, i.e., $O$ has $C$ semantic categories.

Given training data set $O$, the proposed AGSFH aims to learn a set of hash functions $H^m(x^m)=\left\{h_1^m(x^m), h_2^m(x^m), \ldots, h_K^m(x^m)\right\}$ for the $m$th modal data. At the same time, a common binary code matrix $B=\left[b_1, b_2, \ldots , b_N \right]\in \left\{-1, 1\right\}^{K \times N}$ is constructed, where binary vector $b_i \in \left\{-1, 1\right\}^{K}$ is the $K$-bits code for instance $o_i$. For the $m$th modality data, its hash function can be written as:
\begin{eqnarray}\label{GSFH1}
h_k^m(x^m)=sgn(f_k^m(x^m)), (k=1, 2, \ldots, K),
\end{eqnarray}
where $sgn(\cdot)$ is the sign function, which return $1$ if $f_k^m(\cdot) > 0$ and $-1$ otherwise. $f_k^m(\cdot)$ is the linear or non-linear mapping function for data of the $m$th modality. For simplicity, we define our hash function at the $m$th modality as $H^m(x^m)=sgn(W_m^Tx^m)$.

\subsection{Anchor Graph Structure Fusion Hashing}
In AGSFH, we first construct anchor graphs for multiple modalities and calculate the anchor graph structure fusion matrix by the Hadamard product. Next, AGSFH jointly learns the intrinsic anchor graph and preserves the anchor fusion affinity into the common binary Hamming space. In intrinsic anchor graph learning, the structure of the intrinsic anchor graph is adaptively tuned by a well-designed objective function so that the number of components of the intrinsic graph is exactly equal to the number of clusters. Based on this process, training instances can be clustered into semantic space. Binary code learning based on the intrinsic anchor graph can guarantee that binary data could preserve the semantic relationship of training instances in semantic space.

\subsubsection{Anchor Graph Learning}
Inspired by the idea of GSF \cite{jour15}, it is straightforward to consider our anchor graph structure fusion similarity as follows. However, in general, building a $k$-nearest-neighbor ($k$-NN) graph using all $N$ points from the database always needs $O(N^2)$ in computational complexity. Furthermore, learning an intrinsic graph among all $N$ points also takes $O(N^2)$ in computational complexity. Hence, with the increasing amount of data, it is intractable for off-line learning. To address these computationally intensive problems, inspired by Anchor Graph Hashing (AGH) \cite{proceeding1}, with multi-modal data, different $k$-NN anchor graph affinity matrices $\hat{A}^{m} (m=1,2, \ldots, M)$ are constructed for $M$ modalities respectively. Besides, We further propose an intrinsic anchor graph learning strategy, which learns the intrinsic anchor graph $\hat{S}$ of the intrinsic graph $S$.

In particular, given an anchor set $T=\left\{t_1, t_2, \ldots, t_P\right\}$, where $t_i = (t_i^1, t_i^2, \ldots, t_i^M)$ is the $i$-th anchor across $M$ modalities, which are randomly sampled from the original data set $O$ or are obtained by performing clustering algorithm over $O$. The anchors matrix $T^m=\left[t_1^m, t_2^m, \ldots , t_P^m \right]\in R^{d_m \times P}$ denotes the feature representations of anchors for the $m$th modality, and the feature vector $t_i^m$ is the $i$th anchor of $T^m$. Then, the anchor graph $\hat{A}^{mT}=[\hat{a}_1^m, \hat{a}_2^m, \ldots, \hat{a}_N^m] \in R^{P \times N}$ between all $N$ data points and $P$ anchors at the $m$th modality can be computed as follows. Using the raw data points from $T$ and $O$ respectively represents a pairwise distance $b_{ji}=\left \| x_i^m - t_j^m\right \|_F^2$, we construct the anchor graph $\hat{A}^{mT}$ by using similar initial graph learning of GSF \cite{jour15}. Therefore, we can assign $k$ neighbor anchors to each data instance by the following Eq.(\ref{GSFH14}).
\begin{equation}\label{GSFH14}
\hat{a}_{ij}^{m\star}=
\begin{cases}
\frac{b_{i,k+1}-b_{i,j}}{kb_{i,k+1}-\sum_{j=1}^k b_{i,j}}, & j \leq k \\
0, & otherwise.
\end{cases}
\end{equation}

Since $P \ll N$, the size of the anchor graph is much smaller than the traditional graph, the storage space and the computational complexity for constructing the graph could be decreased. We can obtain different anchor graph affinity matrices $\hat{A}^{m} (m=1, 2, \ldots, M)$ for $M$ modalities respectively.

Similar to \cite{jour15}, we use the Hadamard product to extract intrinsic edges in multiple anchor graphs for fusing different anchor graph structures $\hat{A}^{m}$ into one anchor affinity graph $\hat{A}$ by
\begin{eqnarray}\label{GSFH15}
\hat{A} = \prod_{m=1}^{M} \hat{A}^{m}.
\end{eqnarray}
which can reduce the storage space and the computational complexity of graph structure fusion. Given a fused anchor affinity matrix $\hat{A}$, we learn a anchor similarity matrix $\hat{S}$ so that the corresponding graph $S \simeq \bar{S} = \hat{S}\hat{S}^T $ has exactly $C$ connected components and vertices in each connected component of the graph can be categorized into one cluster.

For the similarity matrix $S \simeq \bar{S}=\hat{S}\hat{S}^T \geq 0$, there is a theorem \cite{book2} about its Laplacian matrix $L$ \cite{jour17}:
\newtheorem{thm}{\bf Theorem}
\begin{thm}\label{thm1}
The number $C$ of connected components of the graph $S$ is equal to the multiplicity of zeros as an eigenvalue of its Laplacian matrix $L$.
\end{thm}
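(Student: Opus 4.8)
The plan is to prove this classical spectral-graph result by analyzing the null space of the Laplacian $L = D - \bar{S}$ (with $D$ the diagonal degree matrix of the nonnegative symmetric affinity $\bar{S} = \hat{S}\hat{S}^T$). Since the algebraic multiplicity of the eigenvalue $0$ coincides with $\dim \ker(L)$, the entire claim reduces to showing $\dim \ker(L) = C$.

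First I would establish the fundamental quadratic-form identity: for any vector $f \in R^N$,
$$f^T L f = \frac{1}{2}\sum_{i,j} \bar{S}_{ij}(f_i - f_j)^2.$$
Because $\bar{S}_{ij} \geq 0$, this immediately shows $L \succeq 0$, so every eigenvalue is nonnegative and $0$ is indeed the smallest eigenvalue. More importantly, it yields the crucial characterization of the null space: $Lf = 0 \iff f^T L f = 0 \iff f_i = f_j$ whenever $\bar{S}_{ij} > 0$. In words, $f$ lies in $\ker(L)$ exactly when it is constant across every edge of positive weight, hence constant on each connected component of $S$.

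Next I would exploit this characterization through a block-diagonal decomposition. By relabeling the vertices component by component, $L$ takes the form $L = \mathrm{diag}(L_1, \ldots, L_C)$, where each $L_c$ is the Laplacian of a single connected component. For component $c$, let $\mathbf{1}_c$ be the indicator vector that equals $1$ on its vertices and $0$ elsewhere. By the null-space characterization each $\mathbf{1}_c$ satisfies $L\mathbf{1}_c = 0$, and since these vectors have mutually disjoint supports they are linearly independent, giving $\dim \ker(L) \geq C$. Conversely, any $f \in \ker(L)$ is constant on each component, so $f = \sum_{c=1}^{C} \alpha_c \mathbf{1}_c$ for scalars $\alpha_c$, giving $\dim \ker(L) \leq C$. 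Combining the two bounds yields $\dim \ker(L) = C$, which is precisely the multiplicity of $0$.

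I expect the main obstacle to be the careful bookkeeping in the two-sided dimension argument rather than any deep difficulty: I must verify rigorously that the component-indicator vectors both \emph{span} $\ker(L)$ (so no spurious null vector mixing several components inflates the multiplicity beyond $C$) and are genuinely \emph{independent} (so the multiplicity is not smaller). The quadratic-form identity does the heavy lifting by translating the algebraic condition $Lf = 0$ into the combinatorial statement that $f$ is locally constant across edges; once that bridge is in place, the block-diagonal structure makes the counting transparent.
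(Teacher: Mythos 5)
Your proof is correct and complete: the quadratic-form identity $f^TLf=\tfrac{1}{2}\sum_{i,j}\bar{S}_{ij}(f_i-f_j)^2$ (valid because $\bar{S}=\hat{S}\hat{S}^T$ is symmetric and nonnegative), the resulting characterization of the null space as the vectors that are constant on each connected component, and the two-sided counting via the component indicator vectors together give exactly $\dim\ker(L)=C$, which equals the multiplicity of the eigenvalue $0$ since $L$ is symmetric. The paper does not actually prove Theorem~\ref{thm1} --- it only defers to cited references --- and the argument you give is precisely the classical one found there, so your proposal matches the intended proof in all essentials.
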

The proof of Fan’s Theorem can be referred to \cite{jour18, jour19}. As we all known, $L$ is a semi-definite matrix. Hence, $L$ has $N$ non-negative eigenvalues $0 = \lambda_1 \leq \lambda_2 \leq \ldots \leq \lambda_N$. Theorem \ref{thm1} tells that if the constraint $\sum_{c=1}^C \lambda_c =0$ is satisfied, the graph $S$ has an ideal neighbors assignment and the data instances are already clustered into $C$ clusters. According to Fan’s theorem \cite{jour21}, we can obtain an objective function,
\begin{eqnarray}\label{GSFH3}
&& \sum_{c=1}^C \lambda_c = \min_{U} \left \langle UU^T, L \right \rangle \nonumber \\
\mathrm{s.t.} && U \in R^{N \times C}, U^TU=I_C,
\end{eqnarray}
where $\left \langle \cdot \right \rangle$ denotes the Frobenius inner product of two matrices, $U^T = [u_1, u_2, \ldots, u_N]$, $L=D-\hat{S}\hat{S}^T$ is the Laplacian matrix, $I_C \in R^{C \times C}$ is an identity matrix, $D$ is a diagonal matrix and its elements are column sums of $\hat{S}\hat{S}^T$.

Furthermore, for intrinsic anchor graph, $\bar{S}$ can be normalized as $\tilde{S} = \hat{S}\Lambda\hat{S}^T$ where $\Lambda= diag(\hat{S}^T1) \in R^{P \times P}$. The approximate intrinsic graph matrix $\tilde{S}$ has a key property as it has unit row and column sums. Hence, the graph laplacian of the intrinsic anchor graph is $L=I-\tilde{S}$, so the required $C$ graph laplacian eigenvectors $U$ in Eq.(\ref{GSFH3}) are also eigenvectors of $\tilde{S}$ but associated with the eigenvalue $1$ (the eigenvalue $1$ corresponding to eigenvalue $0$ of $L$). One can easily find $\tilde{S}=\hat{S}\Lambda\hat{S}^T$ has the same non-zeros eigenvalue with $E=\Lambda^{-1/2}\hat{S}^T \hat{S}\Lambda^{-1/2}$, resulting in $L=I-\tilde{S}$ has the same multiplicity of eigenvalue $0$ with $\hat{L}=I-E$. Hence, similar to Eq.(\ref{GSFH3}), then we have an objective function for anchor graph learning as follows.
\begin{eqnarray}\label{GSFH37}
&& \sum_{c=1}^C \lambda_c = \min_{V} \left \langle VV^T, \hat{L} \right \rangle \nonumber \\
\mathrm{s.t.} && V \in R^{p \times C}, V^TV=I_C,
\end{eqnarray}
where $\hat{S}^T = [\hat{s}_1, \hat{s}_2, \ldots, \hat{s}_N]$ and $V^T = [v_1, v_2, \ldots, v_P]$.

Because the graph $\hat{A}$ contains edges of the intrinsic structure, we need $\hat{S}$ to best approximate $\hat{A}$, then we optimize the following objective function,
\begin{eqnarray}\label{GSFH4}
& & \max_{\hat{S}}  \left \langle \hat{A}, \hat{S} \right \rangle \nonumber \\
\mathrm{s.t.} & & \forall j, \hat{s}_j \geq 0, 1^T\hat{s}_j=1,
\end{eqnarray}
where we constrain $1^T\hat{s}_j=1$ so that it has unit row sum.

By combining Eq. (\ref{GSFH37}) with Eq. (\ref{GSFH4}), we have,
\begin{eqnarray}\label{GSFH6}
& & \min_{V,\hat{S}}  \left \langle VV^T, \hat{L} \right \rangle - \gamma_1 \left \langle \hat{A}, \hat{S} \right \rangle + \gamma_2 \left \| \hat{S} \right \|_F^2 \nonumber \\
\mathrm{s.t.} & & V \in R^{p \times C}, V^TV=I_C, \nonumber \\
              & & \forall j, \hat{s}_j \geq 0, 1^T\hat{s}_j=1,
\end{eqnarray}
where $\gamma_1$ is the weight controller parameter and $\gamma_2$ is the regularization parameter. To avoid trivial solution when optimizing objective with respect to $\hat{s}_j$ in Eq.(\ref{GSFH6}), we add $L_2$-norm regularization to smooth the elements of $\hat{S}$.

We tune the structure of $\hat{S}$ adaptively so that we achieve the condition $\sum_{c=1}^C \lambda_c =0$ to obtain a anchor graph $\hat{S}$ with exactly $C$ number of connected components. As opposed to pre-computing affinity graphs, in Eq.(\ref{GSFH6}), the affinity of the adaptive anchor graph $\hat{S}$, i.e., $\hat{s}_{ij}$, is learned by modeling fused anchor graph $\hat{A}$ from multiple modalities. The learning procedures of multiple modalities mutually are beneficial and reciprocal.

\subsubsection{The Proposed AGSFH Scheme}
Ideally, if the instances $o_i$ and $o_j$ are similar, the Hamming distance between their binary codes should be minimal, and vice versa. We do this by maximizing the approximation between the learned intrinsic similarity matrix $\bar{S}$ and the Hamming similarity matrix $H=B^TB$, which can be written as $\max_{B} \left \langle \bar{S}, B^TB \right \rangle=Tr(B\bar{S}B^T)=Tr(B\hat{S}\hat{S}^TB^T)$. Then, we can assume $B_s = sgn(B\hat{S}) \in \left\{-1, +1\right\}^{K \times P}$ to be the binary anchors. To learn the binary codes, the objective
function can be written as:
\begin{eqnarray}\label{GSFH17}
& & \max_{B,B_s} Tr(B\hat{S}B_s^T), \nonumber \\
\mathrm{s.t.} & & B \in \left\{-1, +1\right\}^{K \times N}, B_s \in \left\{-1, +1\right\}^{K \times P}.
\end{eqnarray}

Intuitively, we learn the $m$th modality hash function $H^m(x^m)$ by minimizing the error term between the linear hash function in Eq.(\ref{GSFH1}) by $\left \| B- H^m(x^m) \right \|_F^2$. Such hash function learning can be easily integrated into the overall cross modality similarity persevering, which is rewritten as:
\begin{eqnarray}\label{GSFH18}
& & \min_{B,B_s,W_{m}} -Tr(B\hat{S}B_s^T)+\lambda \sum_{m=1}^M \left \| B- W_{m}^TX^{m} \right \|_F^2, \nonumber \\
\mathrm{s.t.} & & B \in \left\{-1, +1\right\}^{K \times N}, B_s \in \left\{-1, +1\right\}^{K \times P}.
\end{eqnarray}
where $\lambda$ is a tradeoff parameter to control the weights between minimizing the binary quantization and maximizing the approximation.

Therefore, by combining Eq.(\ref{GSFH6}) with Eq.(\ref{GSFH18}), we have the overall objective as follows.
\begin{eqnarray}\label{GSFH19}
& & \min_{V,\hat{S},B,B_s,W_{m}}  Tr(V^T\hat{L}V) - \gamma_1 Tr(\hat{A}^T \hat{S}) + \gamma_2 \left \| \hat{S} \right \|_F^2 \nonumber \\
&& -\gamma_3 Tr(B\hat{S}B_s^T)+\lambda\sum_{m=1}^M \left \| B- W_{m}^TX^{m} \right \|_F^2 \nonumber \\
\mathrm{s.t.} & & V \in R^{P \times C}, V^TV=I_C, \nonumber \\
              & & \forall j, \hat{s}_j \geq 0, 1^T\hat{s}_j=1, \nonumber\\
              & &  B \in \left\{-1, +1\right\}^{K \times N}, B_s \in \left\{-1, +1\right\}^{K \times P},
\end{eqnarray}
where $\gamma_3$ is a weight controller parameter, $\hat{L}=I-\Lambda^{-1/2}\hat{S}^T \hat{S}\Lambda^{-1/2}$, and $\Lambda= diag(\hat{S}^T1)$.

\subsection{Algorithms Design}
Objective Eq.(\ref{GSFH19}) is a mixed binary programming and non-convex with variables $V$, $\hat{S}$, $B$, $B_s$, and $W_{m}$ together. To solve this issue, an alternative optimization framework is developed, where only one variable is optimized with the other variable fixed at each step. The details of the alternative scheme are as follows.
\begin{enumerate}[1.]
\setlength{\listparindent}{2em}

\item $\textbf{$\hat{S}$ step}$.

\par\setlength\parindent{2em} By fixing $V$, $\Lambda$, $B$, $B_s$, and $W_{m}$, optimizing problem (\ref{GSFH19}) becomes:
\begin{eqnarray}\label{GSFH20}
&& \min_{\hat{S}}  Tr(V^T\hat{L}V) - \gamma_1 Tr(\hat{A}^T \hat{S}) + \gamma_2 \left \| \hat{S} \right \|_F^2 \nonumber \\
&& -\gamma_3 Tr(B\hat{S}B_s^T) \nonumber \\
\mathrm{s.t.}  & & \forall j, \hat{s}_j \geq 0, 1^T\hat{s}_j=1. \nonumber\\
\end{eqnarray}

Note that the problem Eq.(\ref{GSFH20}) is independent between different $j$, then we have,
\begin{eqnarray}\label{GSFH21}
&& \min_{\hat{s}_j}  f(\hat{s}_j) = \hat{s}_j^T (\tilde{V}\tilde{V}^T + \gamma_2 I) \hat{s}_j \nonumber \\
&& - (\gamma_1 \hat{a}_j^T + \gamma_3 b_j^T B_s) \hat{s}_j, \nonumber \\
\mathrm{s.t.} & & \hat{s}_j \geq 0, 1^T\hat{s}_j=1, \nonumber\\
\end{eqnarray}
where $\tilde{V}=\Lambda^{-1/2}V$. We can find the constraints in problem (\ref{GSFH21}) is simplex, which indeed can lead to sparse solution $\hat{s}_j$ and have empirical success in various applications (because $\left \| \hat{s}_j \right \|_1 = 1^T\hat{s}_j=1$).

In order to solve the Eq.(\ref{GSFH21}) for large $P$, it is more appropriate to apply first-order methods. In this paper, we use the Nesterov's accelerated projected gradient method to optimize Eq.(\ref{GSFH21}). We will present the details of the optimization as follows. We can easily find that the objective function (\ref{GSFH21}) is convex, the gradient of the objective function (\ref{GSFH21}) is Lipschitz continuous, and the Lipschitz constant is $Lp=2 \left \| \tilde{V}\tilde{V}^T + \gamma_2 I \right \|_2$ (i.e., the largest singular value of $2(\tilde{V}\tilde{V}^T + \gamma_2 I)$). One can see the detailed proofs of these results in Theorem \ref{thm2} and Theorem \ref{thm3}, respectively. According to these results, problem (\ref{GSFH21}) can be efficiently solved by Nesterov's optimal gradient method (OGM) \cite{jour22}.

\begin{thm}\label{thm2}
The objective function $f(\hat{s}_j)$ is convex.
\end{thm}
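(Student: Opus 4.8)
The plan is to recognize that $f(\hat{s}_j)$ is a quadratic function in the variable $\hat{s}_j$ and to invoke the standard characterization that a quadratic form is convex if and only if its Hessian is positive semidefinite. First I would write $f(\hat{s}_j) = \hat{s}_j^T Q \hat{s}_j - c^T \hat{s}_j$ with $Q = \tilde{V}\tilde{V}^T + \gamma_2 I$ and $c = \gamma_1 \hat{a}_j + \gamma_3 B_s^T b_j$. Since the term $-c^T \hat{s}_j$ is affine in $\hat{s}_j$, it contributes nothing to the curvature, so convexity is determined entirely by the quadratic term $\hat{s}_j^T Q \hat{s}_j$.

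Next I would compute the Hessian of $f$, which is $\nabla^2 f(\hat{s}_j) = 2Q = 2(\tilde{V}\tilde{V}^T + \gamma_2 I)$, a constant symmetric matrix independent of $\hat{s}_j$. It then remains to verify that $Q$ is positive semidefinite. For the first summand, for any $z \in R^{P}$ we have $z^T \tilde{V}\tilde{V}^T z = \left \| \tilde{V}^T z \right \|_2^2 \geq 0$, so $\tilde{V}\tilde{V}^T$ is positive semidefinite by its Gram-matrix structure. For the second summand, because $\gamma_2$ is a nonnegative regularization parameter, $\gamma_2 I$ is positive semidefinite. Since the sum of two positive semidefinite matrices is positive semidefinite, $Q$ is positive semidefinite, hence $\nabla^2 f \succeq 0$ everywhere, which establishes that $f$ is convex (indeed strictly convex whenever $\gamma_2 > 0$).

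There is essentially no genuine obstacle in this argument: the entire content reduces to the positive semidefiniteness of $\tilde{V}\tilde{V}^T + \gamma_2 I$, which follows immediately from the Gram-matrix form of $\tilde{V}\tilde{V}^T$ together with $\gamma_2 \geq 0$. The only point to state explicitly is the assumption $\gamma_2 \geq 0$, since convexity would fail if the regularization weight were negative; given the role of $\gamma_2$ as a smoothing regularizer this assumption is harmless. I would also remark that this same matrix $Q$ reappears in the Lipschitz-constant computation of Theorem \ref{thm3}, so the positive semidefiniteness established here is reused downstream.
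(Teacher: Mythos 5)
Your proof is correct, but it takes a different route from the paper's. The paper verifies the definition of convexity directly: it expands $f(\mu \hat{s}_j^1 + (1-\mu)\hat{s}_j^2) - \mu f(\hat{s}_j^1) - (1-\mu)f(\hat{s}_j^2)$ and shows by algebra that this equals $\mu(\mu-1)(\hat{s}_j^1 - \hat{s}_j^2)^T (\tilde{V}\tilde{V}^T + \gamma_2 I)(\hat{s}_j^1 - \hat{s}_j^2) \leq 0$, using $\mu(\mu-1)<0$ together with the nonnegativity of the quadratic form. You instead invoke the second-order characterization, computing the constant Hessian $2(\tilde{V}\tilde{V}^T + \gamma_2 I)$ and checking it is positive semidefinite. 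The two arguments hinge on exactly the same fact --- positive semidefiniteness of $\tilde{V}\tilde{V}^T + \gamma_2 I$, which both of you establish via the Gram-matrix structure of $\tilde{V}\tilde{V}^T$ plus $\gamma_2 \geq 0$ --- so they are mathematically equivalent for a quadratic; the paper's identity is just the explicit form of what the Hessian condition encodes. Your version is shorter and standard for quadratics, and you add two useful observations the paper omits: the explicit dependence on the sign assumption $\gamma_2 \geq 0$ (which the paper's final inequality also silently requires) and the strict convexity when $\gamma_2 > 0$. The paper's version is more elementary in that it needs no differentiability apparatus, only the definition of a convex function.
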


\begin{proof}

Given any two vector $\hat{s}_j^1, \hat{s}_j^2 \in R^{P \times 1}$ and a positive number $\mu \in (0,1)$, we have
\begin{eqnarray}\label{GSFH37}
&& f(\mu \hat{s}_j^1 + (1-\mu)\hat{s}_j^2) - (\mu f(\hat{s}_j^1) + (1-\mu)f(\hat{s}_j^2)) \nonumber \\
&& = (\mu \hat{s}_j^1 + (1-\mu)\hat{s}_j^2)^T (\tilde{V}\tilde{V}^T + \gamma_2 I) (\mu \hat{s}_j^1 + (1-\mu)\hat{s}_j^2) \nonumber \\
&& - (\gamma_1 \hat{a}_j^T + \gamma_3 b_j^T B_s) (\mu \hat{s}_j^1 + (1-\mu)\hat{s}_j^2) \nonumber \\
&& - \mu (\hat{s}_j^{1T} (\tilde{V}\tilde{V}^T + \gamma_2 I) \hat{s}_j^1 - (\gamma_1 \hat{a}_j^T + \gamma_3 b_j^T B_s) \hat{s}_j^1) \nonumber \\
&& - (1-\mu) (\hat{s}_j^{2T} (\tilde{V}\tilde{V}^T + \gamma_2 I) \hat{s}_j^2 \nonumber \\
 && - (\gamma_1 \hat{a}_j^T + \gamma_3 b_j^T B_s) \hat{s}_j^2)
\end{eqnarray}

By some algebra, (\ref{GSFH37}) is equivalent to
\begin{eqnarray}\label{GSFH38}
&& f(\mu \hat{s}_j^1 + (1-\mu)\hat{s}_j^2) - (\mu f(\hat{s}_j^1) + (1-\mu)f(\hat{s}_j^2)) \nonumber \\
&& = \mu(\mu-1)(\hat{s}_j^1 - \hat{s}_j^2)^T (\tilde{V}\tilde{V}^T + \gamma_2 I)(\hat{s}_j^1 - \hat{s}_j^2) \nonumber \\
&& = \mu(\mu-1)(\left \| \tilde{V}^T (\hat{s}_j^1 - \hat{s}_j^2) \right \|_F^2 + \gamma_2\left \|\hat{s}_j^1 - \hat{s}_j^2\right \|_F^2 ) \nonumber \\
&& \leq 0.
\end{eqnarray}

Therefore, we have
\begin{eqnarray}\label{GSFH39}
&& f(\mu \hat{s}_j^1 + (1-\mu)\hat{s}_j^2) \leq \mu f(\hat{s}_j^1) + (1-\mu)f(\hat{s}_j^2).
\end{eqnarray}

According to the definition of convex function, we know $f(\hat{s}_j)$ is convex. This completes the proof.
\end{proof}

\begin{thm}\label{thm3}
The gradient of the objective function $f(\hat{s}_j)$ is Lipschitz continuous and the Lipschitz constant is $Lp=2 \left \| \tilde{V}\tilde{V}^T + \gamma_2 I \right \|_2$ (i.e., the largest singular value of $2(\tilde{V}\tilde{V}^T + \gamma_2 I)$).
\end{thm}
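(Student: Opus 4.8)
The plan is to exploit the fact that $f(\hat{s}_j)$ in Eq.(\ref{GSFH21}) is a quadratic function, so that its gradient is affine and the Lipschitz constant is dictated entirely by the quadratic (Hessian) part. First I would compute the gradient directly. Since the matrix $\tilde{V}\tilde{V}^T + \gamma_2 I$ is symmetric, differentiating the quadratic term $\hat{s}_j^T (\tilde{V}\tilde{V}^T + \gamma_2 I) \hat{s}_j$ yields $2(\tilde{V}\tilde{V}^T + \gamma_2 I)\hat{s}_j$, while the linear term $(\gamma_1 \hat{a}_j^T + \gamma_3 b_j^T B_s)\hat{s}_j$ contributes the constant vector $\gamma_1 \hat{a}_j + \gamma_3 B_s^T b_j$. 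Hence $\nabla f(\hat{s}_j) = 2(\tilde{V}\tilde{V}^T + \gamma_2 I)\hat{s}_j - (\gamma_1 \hat{a}_j + \gamma_3 B_s^T b_j)$.

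Next I would take any two points $\hat{s}_j^1, \hat{s}_j^2 \in R^{P \times 1}$ and subtract their gradients. The crucial observation is that the constant linear term cancels, leaving $\nabla f(\hat{s}_j^1) - \nabla f(\hat{s}_j^2) = 2(\tilde{V}\tilde{V}^T + \gamma_2 I)(\hat{s}_j^1 - \hat{s}_j^2)$. Taking the Euclidean norm of both sides and invoking submultiplicativity of the induced (operator) $2$-norm then gives $\| \nabla f(\hat{s}_j^1) - \nabla f(\hat{s}_j^2) \|_2 \leq 2 \| \tilde{V}\tilde{V}^T + \gamma_2 I \|_2 \, \| \hat{s}_j^1 - \hat{s}_j^2 \|_2$, which is exactly the Lipschitz inequality with constant $Lp = 2 \| \tilde{V}\tilde{V}^T + \gamma_2 I \|_2$.

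To identify this constant with the largest singular value asserted in the statement, I would observe that $\tilde{V}\tilde{V}^T + \gamma_2 I$ is symmetric positive semidefinite, being the sum of the Gram matrix $\tilde{V}\tilde{V}^T$ and $\gamma_2 I$ with $\gamma_2 > 0$. Consequently its operator $2$-norm coincides with its largest eigenvalue, and for a symmetric positive semidefinite matrix the eigenvalues and singular values agree; therefore $\| 2(\tilde{V}\tilde{V}^T + \gamma_2 I) \|_2$ is precisely the largest singular value of $2(\tilde{V}\tilde{V}^T + \gamma_2 I)$, completing the identification.

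I expect no serious obstacle, since the quadratic structure makes the gradient difference \emph{exactly} linear in $\hat{s}_j^1 - \hat{s}_j^2$, so the only inequality used is the elementary operator-norm bound. The one point deserving minor care is tightness: to justify that $Lp$ is the smallest admissible Lipschitz constant (and hence the correct step-size parameter for Nesterov's OGM), I would note that equality in the operator-norm bound is attained when $\hat{s}_j^1 - \hat{s}_j^2$ is aligned with the top eigenvector of $\tilde{V}\tilde{V}^T + \gamma_2 I$, so the stated value cannot be improved.
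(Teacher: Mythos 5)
Your proposal is correct and follows essentially the same route as the paper's proof: compute the affine gradient, note that the constant term cancels in the difference so that $\nabla f(\hat{s}_j^1)-\nabla f(\hat{s}_j^2)=2(\tilde{V}\tilde{V}^T+\gamma_2 I)(\hat{s}_j^1-\hat{s}_j^2)$, and bound by the spectral norm; the paper merely derives the operator-norm inequality by hand through an explicit SVD $U\Sigma U^T$ and a trace computation, where you invoke it directly. Your added observations (that symmetry and positive semidefiniteness make the largest singular value coincide with the largest eigenvalue, and that the constant is tight along the top eigenvector) are correct refinements not present in the paper but do not change the argument.
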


\begin{proof}

According to (\ref{GSFH21}), we can obtain the gradient of $f(\hat{s}_j)$
\begin{eqnarray}\label{GSFH40}
&& \nabla f(\hat{s}_j) = 2(\tilde{V}\tilde{V}^T + \gamma_2 I) \hat{s}_j - (\gamma_1 \hat{a}_j^T + \gamma_3 b_j^T B_s).
\end{eqnarray}

For any two vector $\hat{s}_j^1, \hat{s}_j^2 \in R^{P \times 1}$, we have
\begin{eqnarray}\label{GSFH41}
&& \left \| \nabla f(\hat{s}_j^1) - \nabla f(\hat{s}_j^2)  \right \|_F^2  \nonumber \\
&& = \left \| 2(\tilde{V}\tilde{V}^T + \gamma_2 I)(\hat{s}_j^1 - \hat{s}_j^2)  \right \|_F^2 \nonumber \\
&& = Tr((U\Sigma U^T(\hat{s}_j^1 - \hat{s}_j^2))^T(U\Sigma U^T(\hat{s}_j^1 - \hat{s}_j^2))),
\end{eqnarray}
where $U\Sigma U^T$ is the SVD decomposition of $2(\tilde{V}\tilde{V}^T + \gamma_2 I)$ and the singular values are $\left \{ \sigma_1, \ldots, \sigma_u \right \}$ listed in a descending order. By some algebra, (\ref{GSFH41}) is equivalent to
\begin{eqnarray}\label{GSFH42}
&& \left \| \nabla f(\hat{s}_j^1) - \nabla f(\hat{s}_j^2)  \right \|_F^2  \nonumber \\
&& = Tr(U^T(\hat{s}_j^1 - \hat{s}_j^2)(\hat{s}_j^1 - \hat{s}_j^2)^T U\Sigma^2) \nonumber \\
&& \leq \sigma_1^2 Tr(U^T(\hat{s}_j^1 - \hat{s}_j^2)(\hat{s}_j^1 - \hat{s}_j^2)^T U) \nonumber \\
&& = \sigma_1^2 \left \| \hat{s}_j^1 - \hat{s}_j^2 \right \|_F^2,
\end{eqnarray}
where $\sigma_1$ is the largest singular value, and the last two equations
come from the fact that $U^TU=I_u$ and $UU^T=I_P$. From (\ref{GSFH42}), we have
\begin{eqnarray}\label{GSFH43}
&& \left \| \nabla f(\hat{s}_j^1) - \nabla f(\hat{s}_j^2)  \right \|_F \nonumber \\
&& \leq  Lp \left \| \hat{s}_j^1 - \hat{s}_j^2 \right \|_F
\end{eqnarray}

Therefore, $\nabla f(\hat{s}_j)$ is Lipschitz continuous and the Lipschitz
constant is the largest singular value of $2(\tilde{V}\tilde{V}^T + \gamma_2 I)$, i.e., $Lp=\left \| 2(\tilde{V}\tilde{V}^T + \gamma_2 I) \right \|_2 = 2 \left \| \tilde{V}\tilde{V}^T + \gamma_2 I\right \|_2$. This completes the proof.
\end{proof}

In particular, we construct two sequences, i.e., $\hat{s}_j^t$ and $z_j^t$, and alternatively update them in each iteration round. For the convenience of notations, we use $C$ to represent the associated constraints in Eq.(\ref{GSFH21}). At the iteration $t$, the two sequences are
\begin{eqnarray}\label{GSFH22}
&\hat{s}_j^t = \arg \min_{\hat{s}_j \in C}  \phi (\hat{s}_j, z_j^{t-1}) = f(z_j^{t-1}) \nonumber \\
& + (\hat{s}_j-z_j^{t-1})^T \nabla f(z_j^{t-1}) + \frac{Lp}{2} \left \| \hat{s}_j-z_j^{t-1} \right \|_2^2 ,
\end{eqnarray}
and
\begin{eqnarray}\label{GSFH23}
z_j^t = \hat{s}_j^t + \frac{c_t - 1}{c_{t+1}} (\hat{s}_j^t - \hat{s}_j^{t-1}),
\end{eqnarray}
where $\phi (\hat{s}_j, z_j^{t-1})$ is the proximal function of $f(\hat{s}_j)$ on $z_j^{t-1}$, $\hat{s}_j^t$ includes the approximate solution obtained by minimizing the proximal function over $\hat{s}_j$, and $z_j^t$ stores the search point that is constructed by linearly combining the latest two approximate solutions, i.e., $\hat{s}_j^t$ and $\hat{s}_j^{t-1}$. According to \cite{jour22}, the combination coefficient is updated in each iteration round
\begin{eqnarray}\label{GSFH24}
c_{t+1} = \frac{1+\sqrt{4c_t+1}}{2}.
\end{eqnarray}

With extra terms independent of $\hat{s}_j$, we could write the objective function in Eq.(\ref{GSFH22}) into a more compact form as follows:
\begin{eqnarray}\label{GSFH25}
\hat{s}_j^t = \arg \min_{\hat{s}_j \in C}  \frac{Lp}{2} \left \| \hat{s}_j-(z_j^{t-1}-\frac{1}{Lp} \nabla f(z_j^{t-1})) \right \|_2^2 .
\end{eqnarray}
Eq.(\ref{GSFH25}) is an euclidean projection problem on the simplex space, which is the same as Eq.(\ref{GSFH7}). According to the Karush-Kuhn-Tucker condition \cite{book1}, it can be verified that the optimal solution $\hat{s}_j^{t}$ is
\begin{eqnarray}\label{GSFH26}
\hat{s}_j^{t}=(z_j^{t-1}-\frac{1}{Lp} \nabla f(z_j^{t-1})+\eta1)_+.
\end{eqnarray}

By alternatively updating $\hat{s}_j^t$, $z_j^t$ and $c_{t+1}$ with (\ref{GSFH22}), (\ref{GSFH23}) and (\ref{GSFH24}) until convergence, the optimal solution can be obtained. Note that recent results \cite{jour20, jour22} show that the gradient-based methods with smooth optimization can achieve the optimal convergence rate $O(\frac{1}{t^2})$, here $t$ is number of iterations. Here the convergence criteria is that the relative change of $\left \| \hat{s}_j \right \|_2$ is less than $10^{-4}$. We initialize $\hat{s}_j^0$ via solving Eq.(\ref{GSFH21}) without considering the constraints. We take the partial derivative of the objective (\ref{GSFH21}) with respect to $\hat{s}_j$. By setting this partial derivative to zero, a closed-form solution of $\hat{s}_j^0$ is acquired
\begin{eqnarray}\label{GSFH27}
\hat{s}_j^{0}= (\tilde{V}\tilde{V}^T + \gamma_2 I)^{-1}(\gamma_1 \hat{a}_j + \gamma_3 B_s^T b_j).
\end{eqnarray}

The full OMG algorithm is summarized in Algorithm \ref{alg:OGM}.

\begin{algorithm}[h]
  \caption{Optimal Gradient Method (OGM).}
  \label{alg:OGM}
  \begin{algorithmic}[1]
    \Require
      $\tilde{V}$, $B$, $B_s$, $\hat{A}$, $\gamma_1$, $\gamma_2$, $\gamma_3$; maximum iteration number $T_{OGM}$.
    \Ensure
      $\hat{S}$.
    \State Initialize $j=1$.
    \Repeat
      \State Initialize $\hat{s}_j^0$ by Eq.(\ref{GSFH27}), $z_j^0 = \hat{s}_j^0$, $t=1$, $c_1=1$.
      \Repeat
        \State $\hat{s}_j^{t}=(z_j^{t-1}-\frac{1}{Lp} \nabla f(z_j^{t-1})+\eta1)_+$.
        \State $c_{t+1} = \frac{1+\sqrt{4c_t+1}}{2}$.
        \State $z_j^t = \hat{s}_j^t + \frac{c_t - 1}{c_{t+1}} (\hat{s}_j^t - \hat{s}_j^{t-1})$.
      \Until Convergence criteria is satisfied or reaching the maximum iteration.
      \State $\hat{s}_j=\hat{s}_j^{t}$.
      \State $j=j+1$.
    \Until $j$ is equal to $N$.
    \State $\hat{S}^T = [\hat{s}_1, \hat{s}_2, \ldots, \hat{s}_N]$.
  \end{algorithmic}
\end{algorithm}

\item $\textbf{$\Lambda$ step}$.

By fixing $V$, $\hat{S}$, $B$, $B_s$, and $W_{m}$, it is easy to solve $\Lambda$ by
\begin{eqnarray}\label{GSFH28}
\Lambda= diag(\hat{S}^T1).
\end{eqnarray}

\item $\textbf{$V$ step}$.

By fixing $\hat{S}$, $\Lambda$, $B$, $B_s$, and $W_{m}$, then, Eq.(\ref{GSFH19}) becomes
\begin{eqnarray}\label{GSFH29}
&& \min_{V} Tr(V^T \hat{L} V) \nonumber \\
\mathrm{s.t.} & & V \in R^{P \times C}, V^TV=I_C.
\end{eqnarray}

The optimal $V$ for Eq.(\ref{GSFH29}) is formed by the $C$ eigenvectors corresponding to the top $C$ smallest eigenvalues of the normalization Laplacian matrix $\hat{L}$.

\item $\textbf{$B$ step}$. \label{sub1}

By fixing $V$, $\hat{S}$, $\Lambda$, $B_s$, and $W_{m}$, the corresponding sub-problem is:
\begin{eqnarray}\label{GSFH30}
& & \min_{B} -\gamma_3 Tr(B\hat{S}B_s^T)+\lambda\sum_{m=1}^M \left \| B- W_{m}^TX^{m} \right \|_F^2 \nonumber \\
\mathrm{s.t.} & &  B \in \left\{-1, +1\right\}^{K \times N},
\end{eqnarray}
which can be expended into:
\begin{eqnarray}\label{GSFH31}
& & \min_{B} - Tr(B(\gamma_3 \hat{S}B_s^T + 2 \lambda \sum_{m=1}^M X^{mT}W_{m} ) \nonumber \\
\mathrm{s.t.} & &  B \in \left\{-1, +1\right\}^{K \times N}.
\end{eqnarray}

This sub-problem can be solved by the following updating result:
\begin{eqnarray}\label{GSFH32}
B = sgn(\gamma_3 B_s \hat{S}^T + 2 \lambda \sum_{m=1}^M W_{m}^T X^{m}).
\end{eqnarray}

\item $\textbf{$B_s$ step}$.

By fixing $V$, $\hat{S}$, $\Lambda$, $B$, and $W_{m}$, the updating of $B_s$ can be referred to:
\begin{eqnarray}\label{GSFH33}
& & \min_{B_s} - \gamma_3 Tr(B\hat{S}B_s^T) \nonumber \\
\mathrm{s.t.} & &  B_s \in \left\{-1, +1\right\}^{K \times P}.
\end{eqnarray}

With the same scheme to deal with sup-problem (\ref{sub1}), this subproblem can be solved as follow:
\begin{eqnarray}\label{GSFH34}
B_s = sgn(B\hat{S}).
\end{eqnarray}
which is similar to the before assumption in Eq.(\ref{GSFH17}).

\item $\textbf{$W_{m}$ step}$.

By fixing $V$, $\hat{S}$, $\Lambda$, $B$, and $B_s$, this sub-problem finds the best mapping coefficient $W_{m}$ by minimizing $\left \| B- W_{m}^TX^{m} \right \|_F^2 $ with the traditional linear regression. Therefore, we update $W_{m}$ as:
\begin{eqnarray}\label{GSFH35}
W_{m} = (X^{m}X^{mT})^{-1}X^{m}B^T.
\end{eqnarray}

\end{enumerate}

The full AGSFH algorithm is summarized in Algorithm \ref{alg:GSFH}.

\begin{algorithm}[h]
  \caption{Anchor Graph Structure Fusion Hashing (AGSFH).}
  \label{alg:GSFH}
  \begin{algorithmic}[1]
    \Require
      feature matrices $X^{m} (m=1,2,\ldots,M)$; code length $K$, the number of anchor points $P$,  the cluster number $C$, the number of neighbor points $k$, maximum iteration number $T_{iter}$; parameters $\gamma_1$, $\gamma_2$, $\gamma_3$, $\lambda$.
    \Ensure
      The hash codes $B$ for training instances $O$ and the projection coefficient matrix $W_{m} (m=1,2,\ldots,M)$.
    \State Uniformly and randomly select $P$ sample pairs from training instances as the anchors $T$.
    \State Construct anchor graph $\hat{A}^{m} (m=1,2,\ldots,M)$ from the data matrix $X^{m}$ and the anchor matrix $T^{m}$ by a $k$-NN graph algorithm.
    \State Calculate anchor graph structure fusion matrix $\hat{A}$ by Eq.(\ref{GSFH15}).
    \State Initialize $V$ by $C$ number of eigenvectors corresponding to the top $C$ smallest eigenvalues of the Laplacian matrix $\hat{L}=I-D^{-1/2}\hat{A}^T \hat{A}D^{-1/2}$, and $D= diag(\hat{A}^T1)$.
    \State Initialize $\Lambda = I_P$.
    \State Initialize $W_{m} (m=1,2,\ldots,M)$ randomly.
    \State Initialize hash codes $B$ and $B_s$ randomly, such that $-1$ and $1$ in each bit are balanced.
    \Repeat
      \State Update $\hat{S}$ by Algorithm \ref{alg:OGM}.
      \State Update $\Lambda$ by Eq.(\ref{GSFH28}).
      \State Update $V$ by Eq.(\ref{GSFH29}), i.e., $V$ is formed by $C$ eigenvectors with the top $C$ smallest eigenvalues of $\hat{L}=I-\Lambda^{-1/2}\hat{S}^T \hat{S}\Lambda^{-1/2}$.
      \State Update $B$ by Eq.(\ref{GSFH32}).
      \State Update $B_s$ by Eq.(\ref{GSFH34}).
      \State Update $W_{m} (m=1,2,\ldots,M)$ by Eq.(\ref{GSFH35}).
    \Until Objective function of Eq.(\ref{GSFH19}) converges or reaching the maximum iteration.
  \end{algorithmic}
\end{algorithm}

\subsection{Convergence Analysis}
The original problem Eq.(\ref{GSFH19}) is not a joint convex problem of $\hat{S}$, $V$, $B$, $B_s$, and $W_{m}$. Hence, we may not obtain a global solution. We divide the original problem into six subproblems, i.e., Eqs.(\ref{GSFH21}), (\ref{GSFH28}), (\ref{GSFH29}), (\ref{GSFH30}), (\ref{GSFH33}) and (\ref{GSFH35}). Since Eq.(\ref{GSFH21}) is a constrained quadratic minimization, Eq.(\ref{GSFH28}) is a linear equation, $\hat{L}$ in Eq.(\ref{GSFH29}) is positive semi-definite, Eqs.(\ref{GSFH30}), (\ref{GSFH33}) are constrained linear minimization, and Eq.(\ref{GSFH35}) is a quadratic minimization, each of them is the convex problem. The six subproblems are solved alternatively, so AGSFH will converge to a local solution. In the section \ref{Convergences}, we will show the convergence curves.

\subsection{Computational Analysis}
The complexity of the proposed AGSFH mainly consists of six
parts: 1) updating intrinsic anchor graph $\hat{S}$, 2) updating $\Lambda$, 3) calculating $C$ eigenvectors of $\hat{L}$, 4) updating hash codes $B$, 5) updating anchor hash codes $B_s$, 6) updating projection coefficients $W_{m} (m=1,2,\ldots,M)$. These six parts are repeated until meeting the convergence condition, and they take $O (T_{OGM}(P+CP+KP+P^2+CP^2+P^3)N)$, $O (NP)$, $O (CP^2+P^2N)$, $O (K(P+1+\sum_{m=1}^M d_m)N)$, $O (KPN)$, and $O (\sum_{m=1}^M (d_m^2+Kd_m)N+\sum_{m=1}^M (d_m^2+d_m^3))$ respectively. Thus, the total complexity of Eq.(\ref{GSFH19}) is
\begin{eqnarray}\label{GSFH36}
&O (T_{iter}(T_{OGM}(P+CP+KP+P^2+CP^2+P^3) \nonumber \\
&+K(P+1+\sum_{m=1}^M d_m)+\sum_{m=1}^M (d_m^2+Kd_m) \nonumber \\
&+P+P^2+KP)N),
\end{eqnarray}
where $T_{iter}$ is the total number of iterations. We can find AGSFH has a linear training time complexity to the training samples.

\section{Experiments}
\renewcommand{\arraystretch}{1.0}
\begin{table}[htb]
  \centering
  \footnotesize
  \setlength{\belowcaptionskip}{10pt}
  \caption{Comparison of MAP with Two Cross-modal Retrieval Tasks on Wiki Benchmark.}
  \label {Table.1}
  \resizebox{0.45\textwidth}{!}{
    \begin{tabular}{ccccccc}
    \hline
    \multirow{2}{*}{Tasks}& \multirow{2}{*}{Methods} &
    \multicolumn{4}{c}{Wiki}\cr\cline{3-6}
    &&16 bits&32 bits&64 bits&128 bits\cr
    \hline
    \multirow{6}{*}{I$\rightarrow$T}
    &CSGH &0.2065	&0.2131	&0.1985	&0.1983
\cr
    &BGDH &0.1815	&0.1717	&0.1717	&0.1717
\cr
    &FSH &0.2426	&0.2609	&0.2622	&{\bf 0.2710}
\cr
    &RFDH &0.2443	&0.2455	&0.2595	&0.2616
\cr
    &JIMFH &0.2384	&0.2501	&0.2472	&0.2542
\cr
    &{\bf AGSFH} &{\bf 0.2548}	&{\bf 0.2681}	&{\bf 0.2640}	&0.2680
\cr
    \hline
    \multirow{6}{*}{T$\rightarrow$I}
    &CSGH &0.2130	&0.2389	&0.2357	&0.2380
\cr
    &BGDH &0.1912	&0.1941	&0.2129	&0.2129
\cr
    &FSH &0.4150	&0.4359	&0.4753	&0.4956
\cr
    &RFDH &0.4185	&0.4438	&0.4633	&0.4922
\cr
    &JIMFH &0.3653	&0.4091	&0.4270	&0.4456
\cr
    &{\bf AGSFH} &{\bf 0.5782}	&{\bf 0.6005}	&{\bf 0.6175}	&{\bf 0.6214}
\cr
    \hline
    \end{tabular}}
\end{table}

\renewcommand{\arraystretch}{1.0}
\begin{table}[htb]
  \centering
  \footnotesize
  \setlength{\belowcaptionskip}{10pt}
  \caption{Comparison of MAP with Two Cross-modal Retrieval Tasks on MIRFlickr25K Benchmark.}
  \label {Table.2}
  \resizebox{0.45\textwidth}{!}{
    \begin{tabular}{ccccccc}
    \hline
    \multirow{2}{*}{Tasks}& \multirow{2}{*}{Method} &
    \multicolumn{4}{c}{MIRFlickr25K}\cr\cline{3-6}
    &&16 bits&32 bits&64 bits&128 bits\cr
    \hline
    \multirow{6}{*}{I$\rightarrow$T}
    &CSGH &0.5240	&0.5238	&0.5238	&0.5238
\cr
    &BGDH &0.5244	&0.5248	&0.5248	&0.5244
\cr
    &FSH &0.6347	&0.6609	&0.6630	&0.6708
\cr
    &RFDH &0.6525	&0.6601	&0.6659	&0.6659
\cr
    &JIMFH &{\bf 0.6563}	&{\bf 0.6703}	&0.6737	&0.6813
\cr
    &{\bf AGSFH} &0.6509	&0.6650	&{\bf 0.6777}	&{\bf 0.6828}
\cr
    \hline
    \multirow{6}{*}{T$\rightarrow$I}
    &CSGH &0.5383	&0.5381	&0.5382	&0.5379
\cr
    &BGDH &0.5360	&0.5360	&0.5360	&0.5360
\cr
    &FSH &0.6229	&0.6432	&0.6505	&0.6532
\cr
    &RFDH &0.6389	&0.6405	&0.6417	&0.6438
\cr
    &JIMFH &0.6432	&0.6570	&0.6605	&0.6653
\cr
    &{\bf AGSFH} &{\bf 0.6565}	&{\bf 0.6862}	&{\bf 0.7209}	&{\bf 0.7505}
\cr
    \hline
    \end{tabular}}
\end{table}

\renewcommand{\arraystretch}{1.0}
\begin{table}[htb]
  \centering
  \footnotesize
  \setlength{\belowcaptionskip}{10pt}
  \caption{Comparison of MAP with Two Cross-modal Retrieval Tasks on NUS-WIDE Benchmark.}
  \label {Table.3}
  \resizebox{0.45\textwidth}{!}{
    \begin{tabular}{ccccccc}
    \hline
    \multirow{2}{*}{Tasks}& \multirow{2}{*}{Method} &
    \multicolumn{4}{c}{NUS-WIDE}\cr\cline{3-6}
    &&16 bits&32 bits&64 bits&128 bits\cr
    \hline
    \multirow{6}{*}{I$\rightarrow$T}
    &CSGH &0.4181	&0.4550	&0.4551	&0.4652
\cr
    &BGDH &0.4056	&0.4056	&0.4056	&0.4056
\cr
    &FSH &{\bf 0.5021}	&0.5200	&0.5398	&{\bf 0.5453}
\cr
    &RFDH &0.4701	&0.4699	&0.4611	&0.4772
\cr
    &JIMFH &0.4952	&{\bf 0.5334}	&0.5223	&0.5334
\cr
    &{\bf AGSFH} &0.4856	&0.5189	&{\bf 0.5401}	&0.5433
\cr
    \hline
    \multirow{6}{*}{T$\rightarrow$I}
    &CSGH &0.4505	&0.5132	&0.5201	&0.5121
\cr
    &BGDH &0.3856	&0.3851	&0.3851	&0.3850
\cr
    &FSH &0.4743	&0.4953	&0.5114	&0.5327
\cr
    &RFDH &0.4701	&0.4713	&0.4651	&0.4626
\cr
    &JIMFH &0.4613	&0.4757	&0.5107	&0.5179
\cr
    &{\bf AGSFH} &{\bf 0.5152}	&{\bf 0.5834}	&{\bf 0.6144}	&{\bf 0.6362}
\cr
    \hline
    \end{tabular}}
\end{table}

\begin{figure*} [ht]
\centering
\subfigure[ ]{
  \includegraphics[width=.3\textwidth]{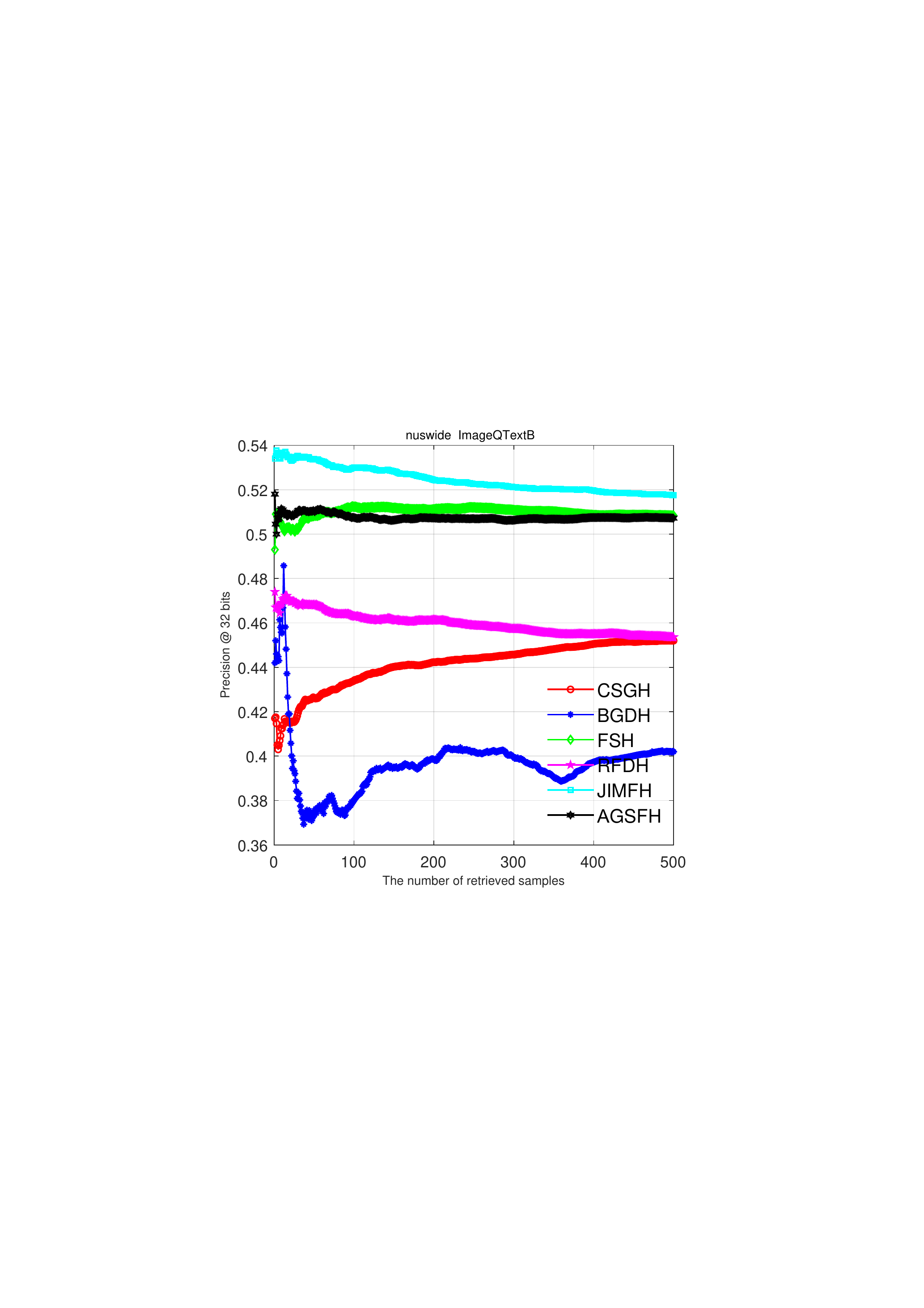}}
\hfill
\centering
\subfigure[ ]{
  \includegraphics[width=.3\textwidth]{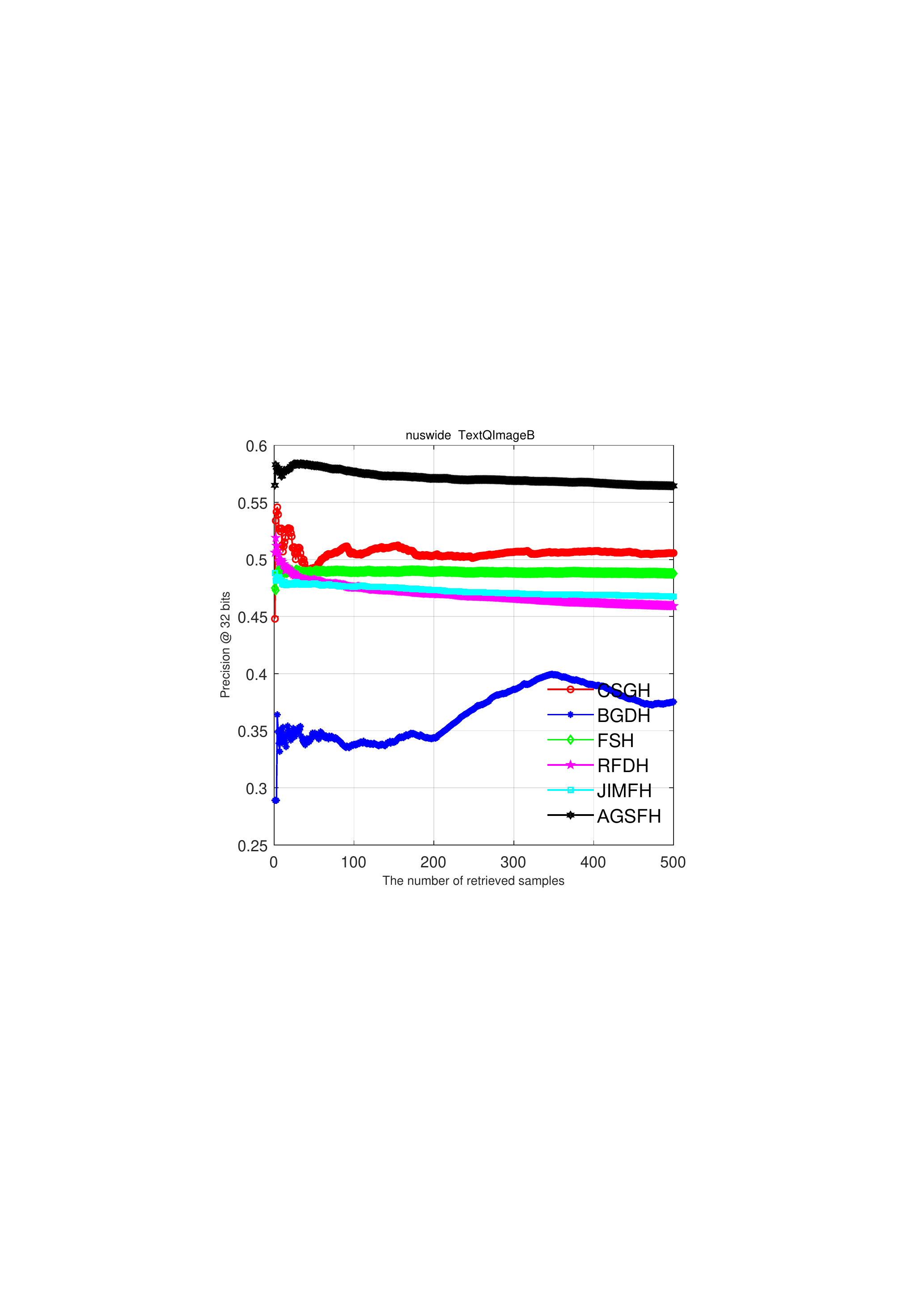}}
\hfill
\centering
\subfigure[ ]{
  \includegraphics[width=.3\textwidth]{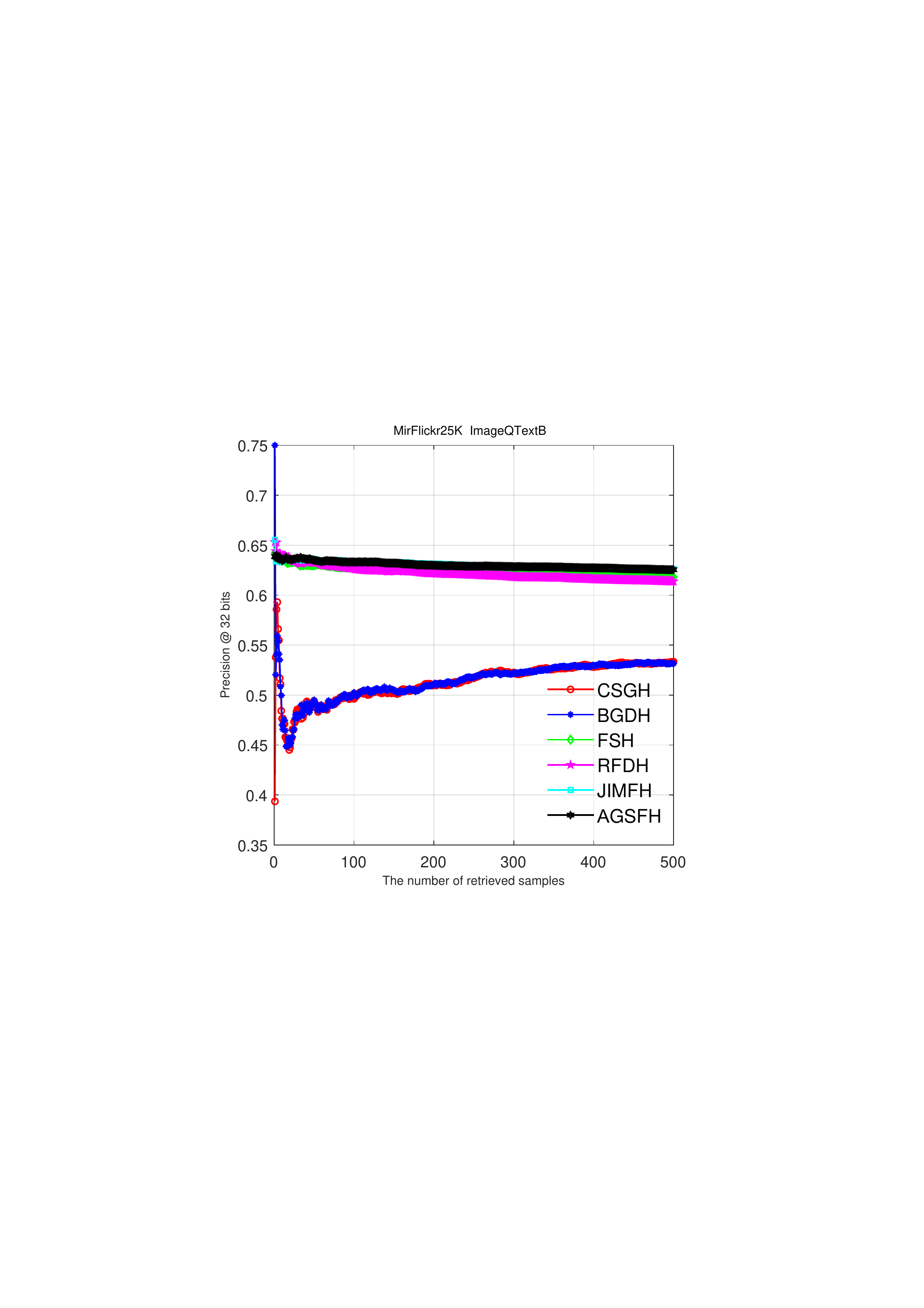}}
\centering
\subfigure[ ]{
  \includegraphics[width=.3\textwidth]{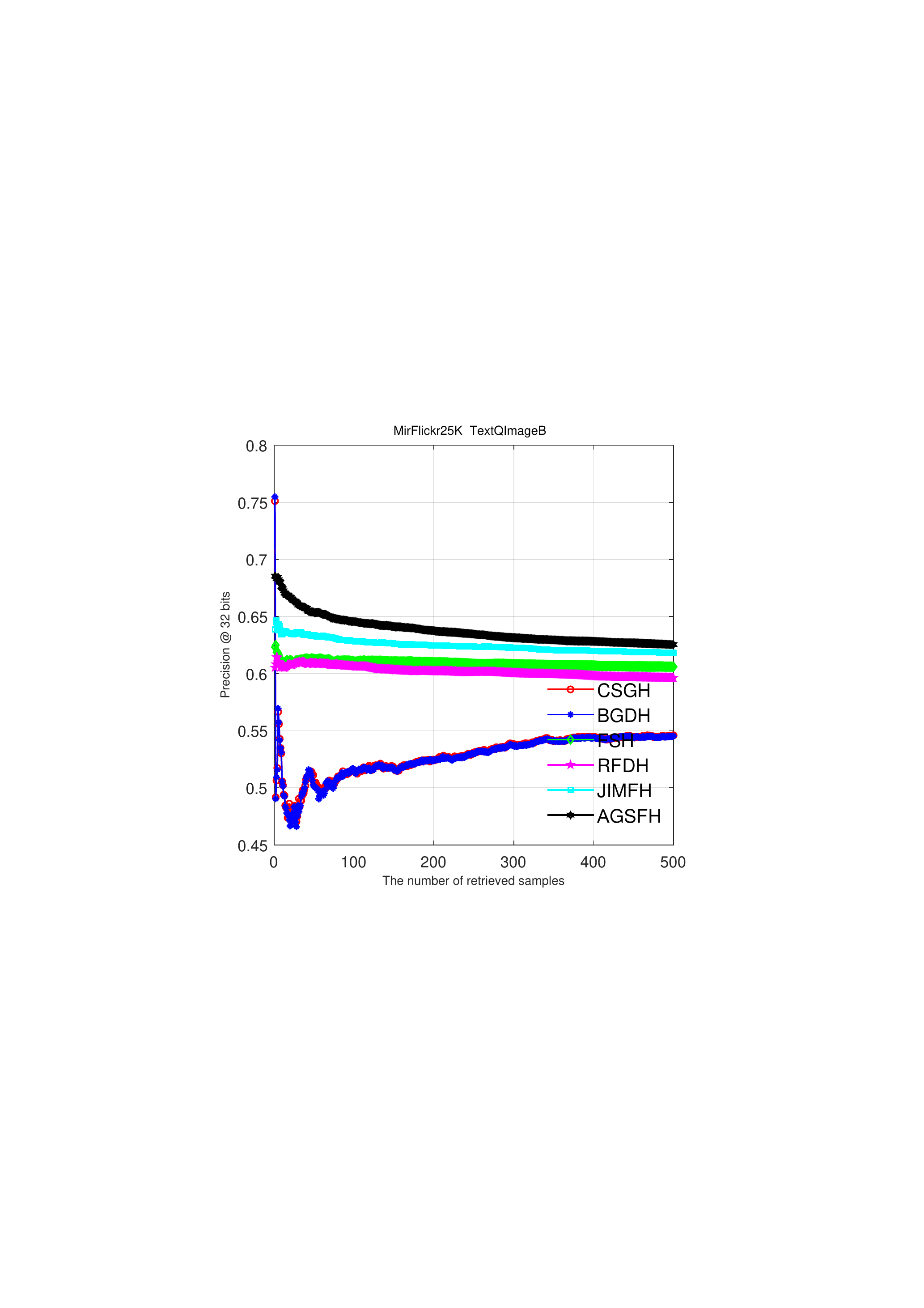}}
\hfill
\centering
\subfigure[ ]{
  \includegraphics[width=.3\textwidth]{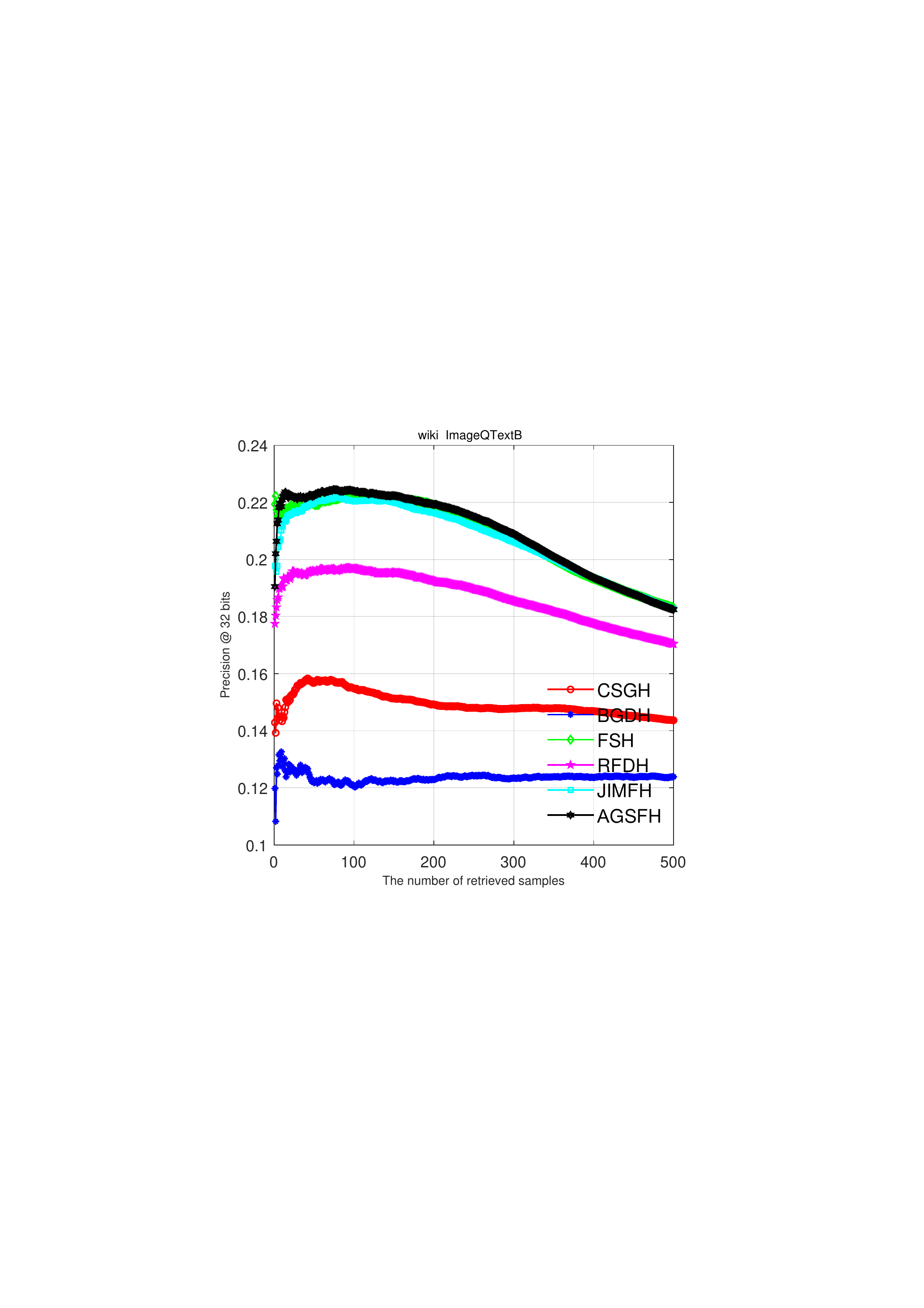}}
\hfill
\centering
\subfigure[ ]{
  \includegraphics[width=.3\textwidth]{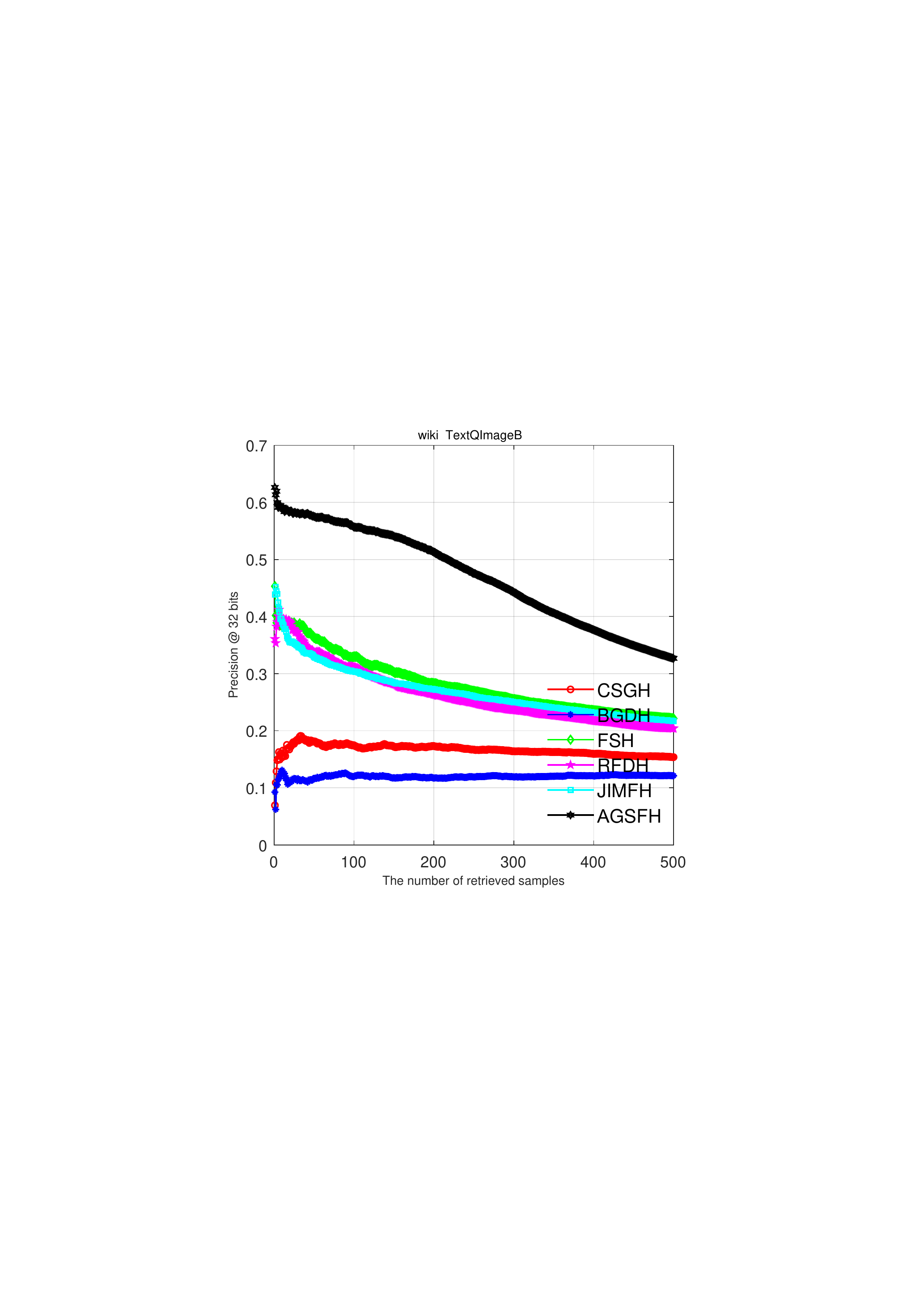}}
\caption{TopN-precision Curves @ $32$ bits on three cross-modal benchmark datasets. (a) and (b) are Nuswide. (c) and (d) are MirFlickr25K. (e) and (f) are Wiki.}
\label{figure3}
\end{figure*}

\begin{figure*} [ht]
\centering
\subfigure[ ]{
  \includegraphics[width=.3\textwidth]{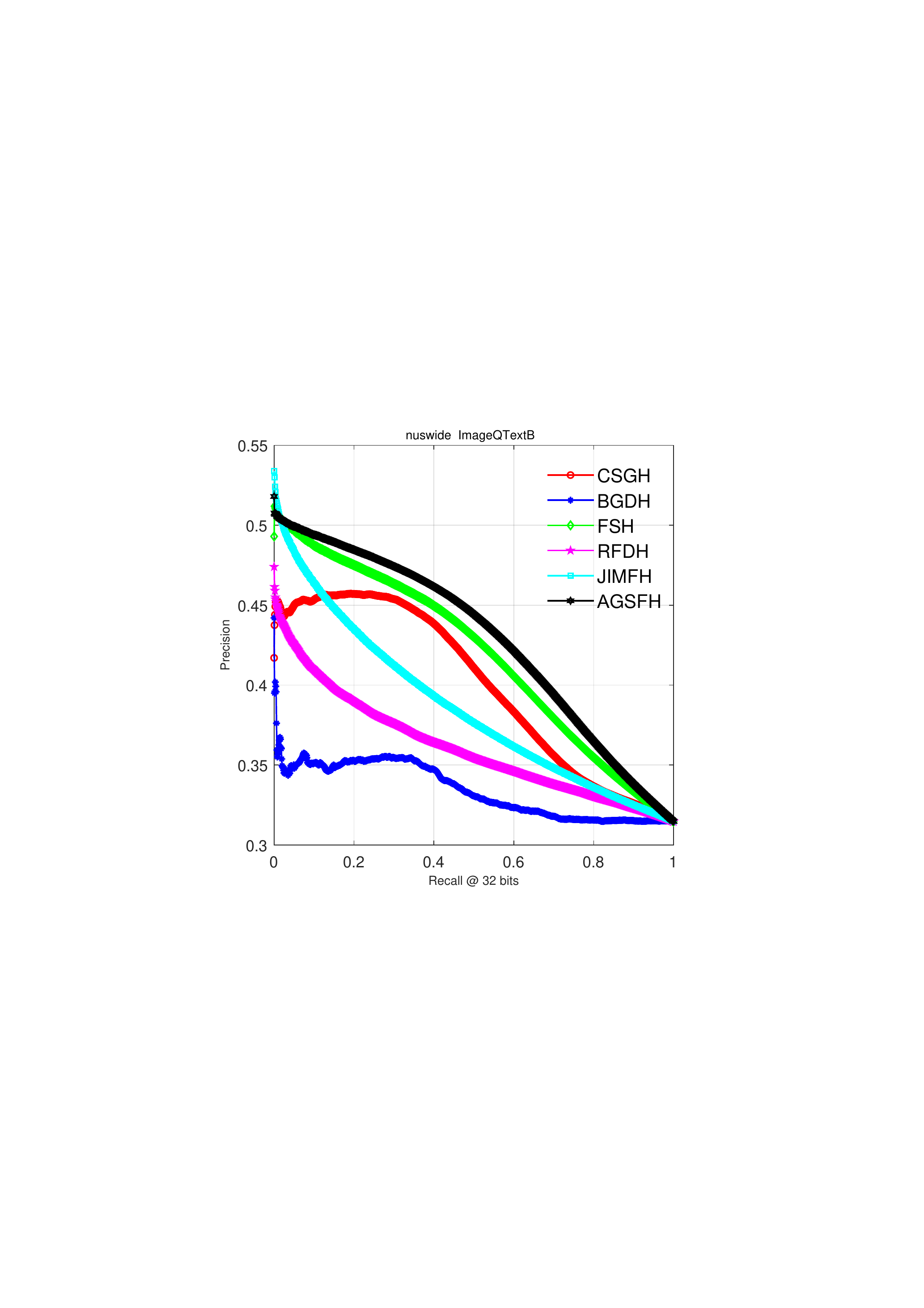}}
\hfill
\centering
\subfigure[ ]{
  \includegraphics[width=.3\textwidth]{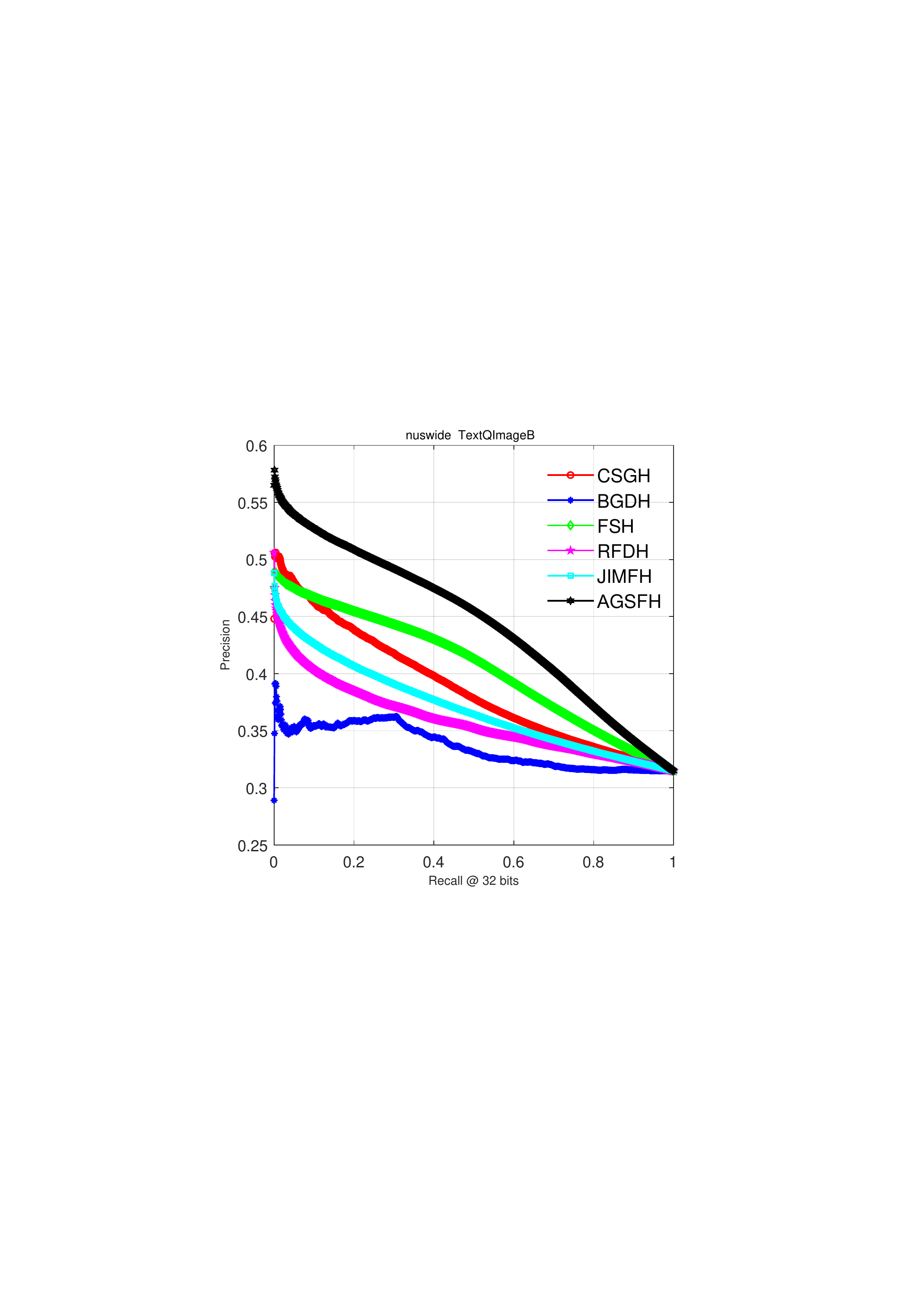}}
\hfill
\centering
\subfigure[ ]{
  \includegraphics[width=.3\textwidth]{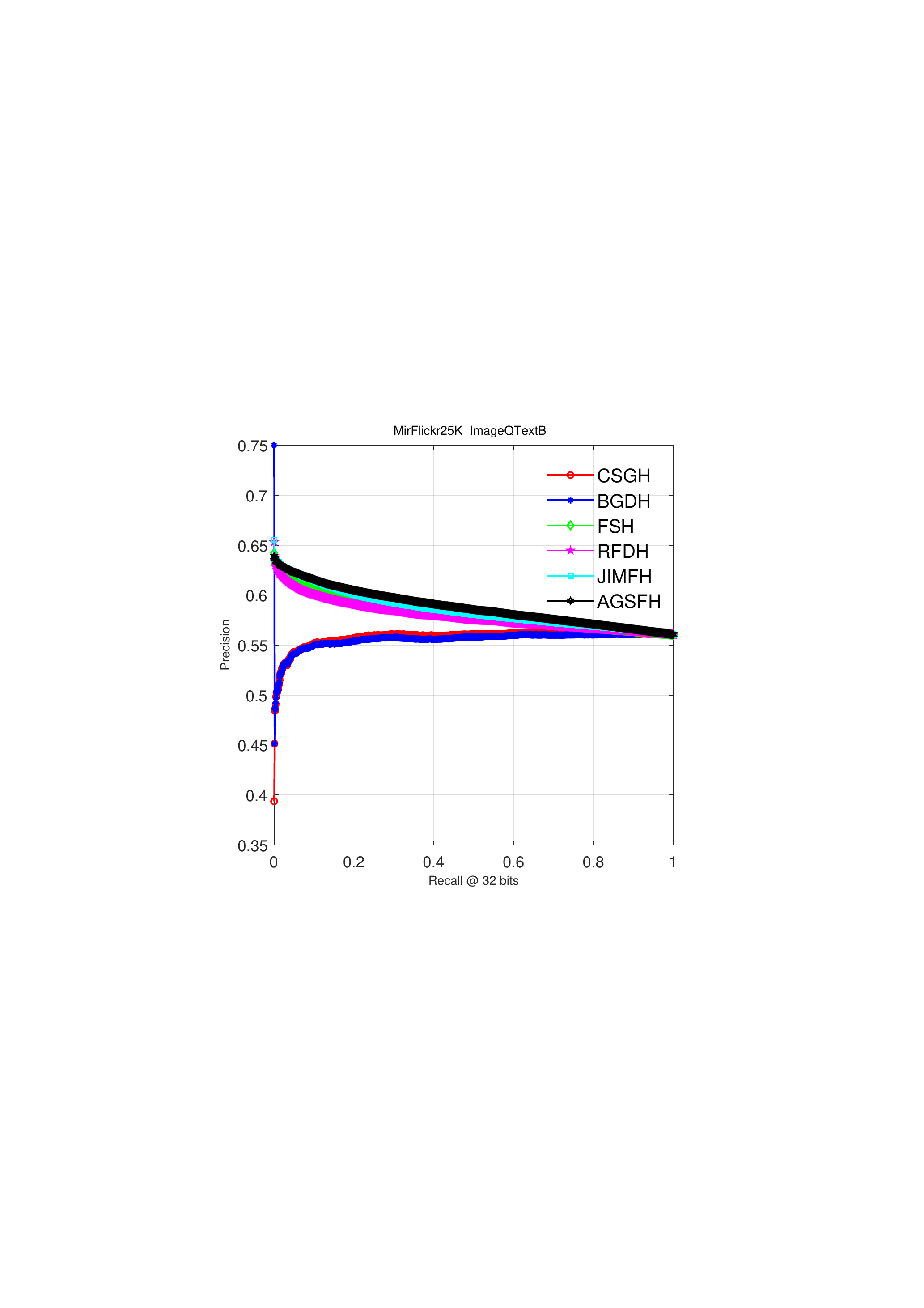}}
\centering
\subfigure[ ]{
  \includegraphics[width=.3\textwidth]{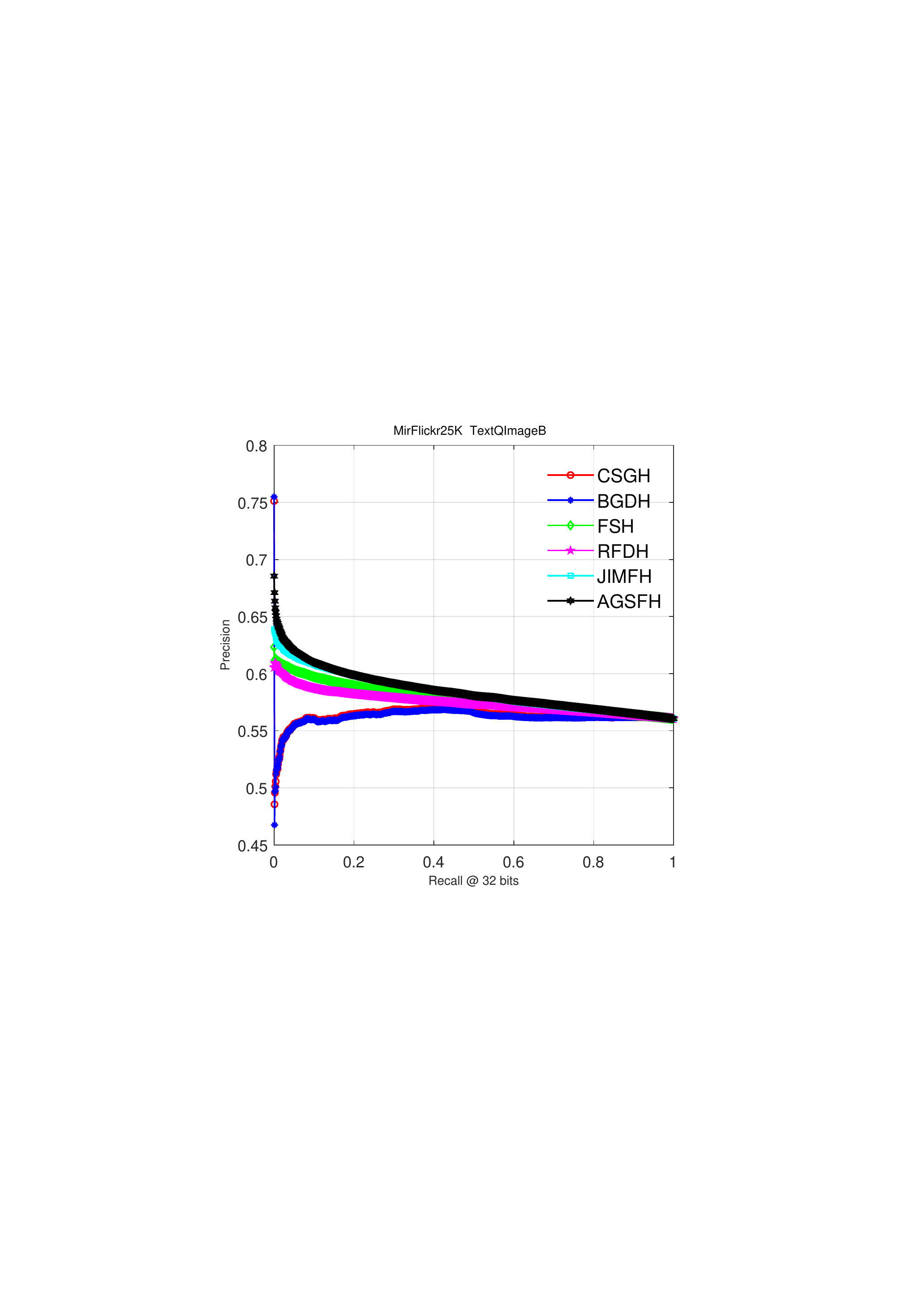}}
\hfill
\centering
\subfigure[ ]{
  \includegraphics[width=.3\textwidth]{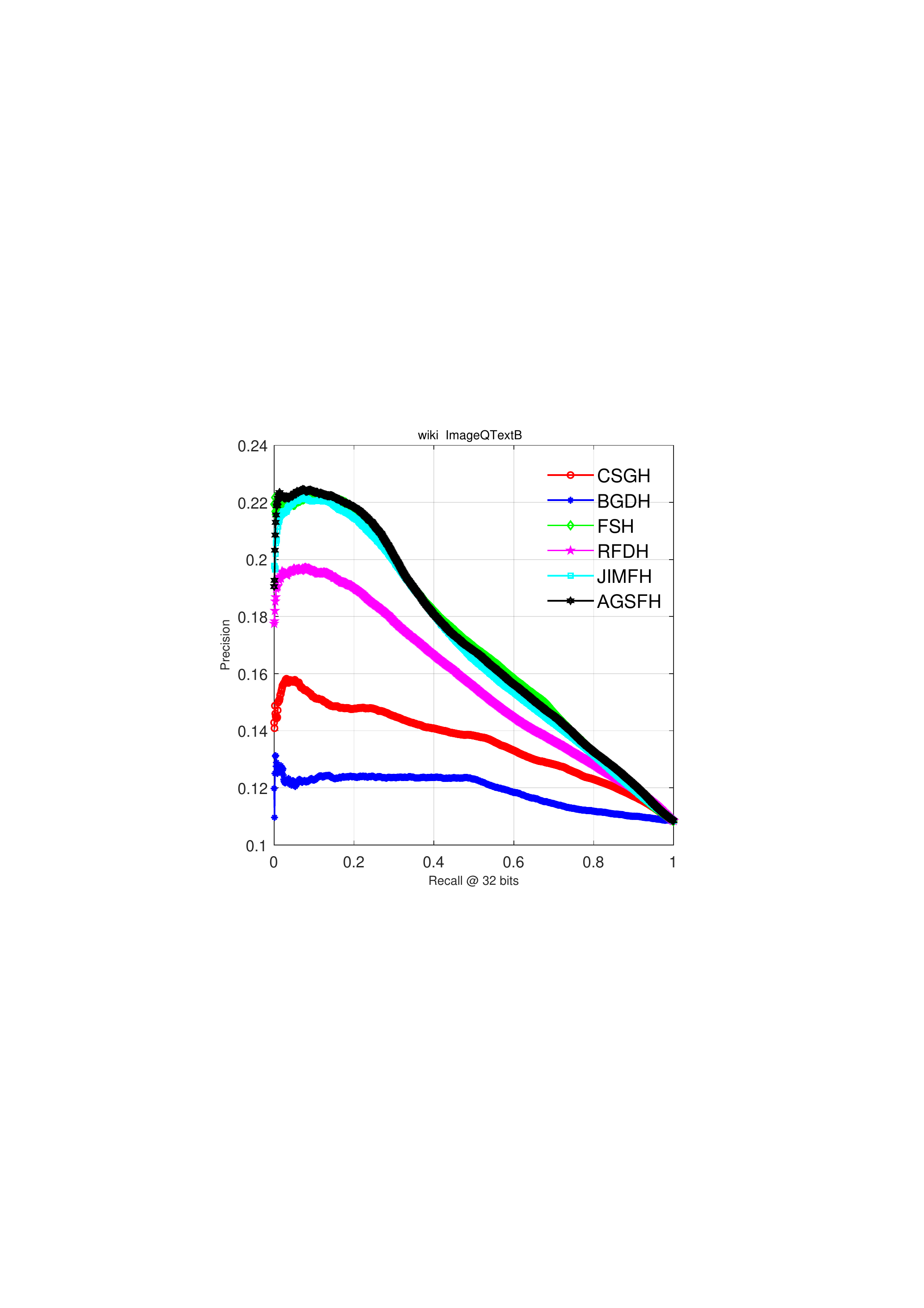}}
\hfill
\centering
\subfigure[ ]{
  \includegraphics[width=.3\textwidth]{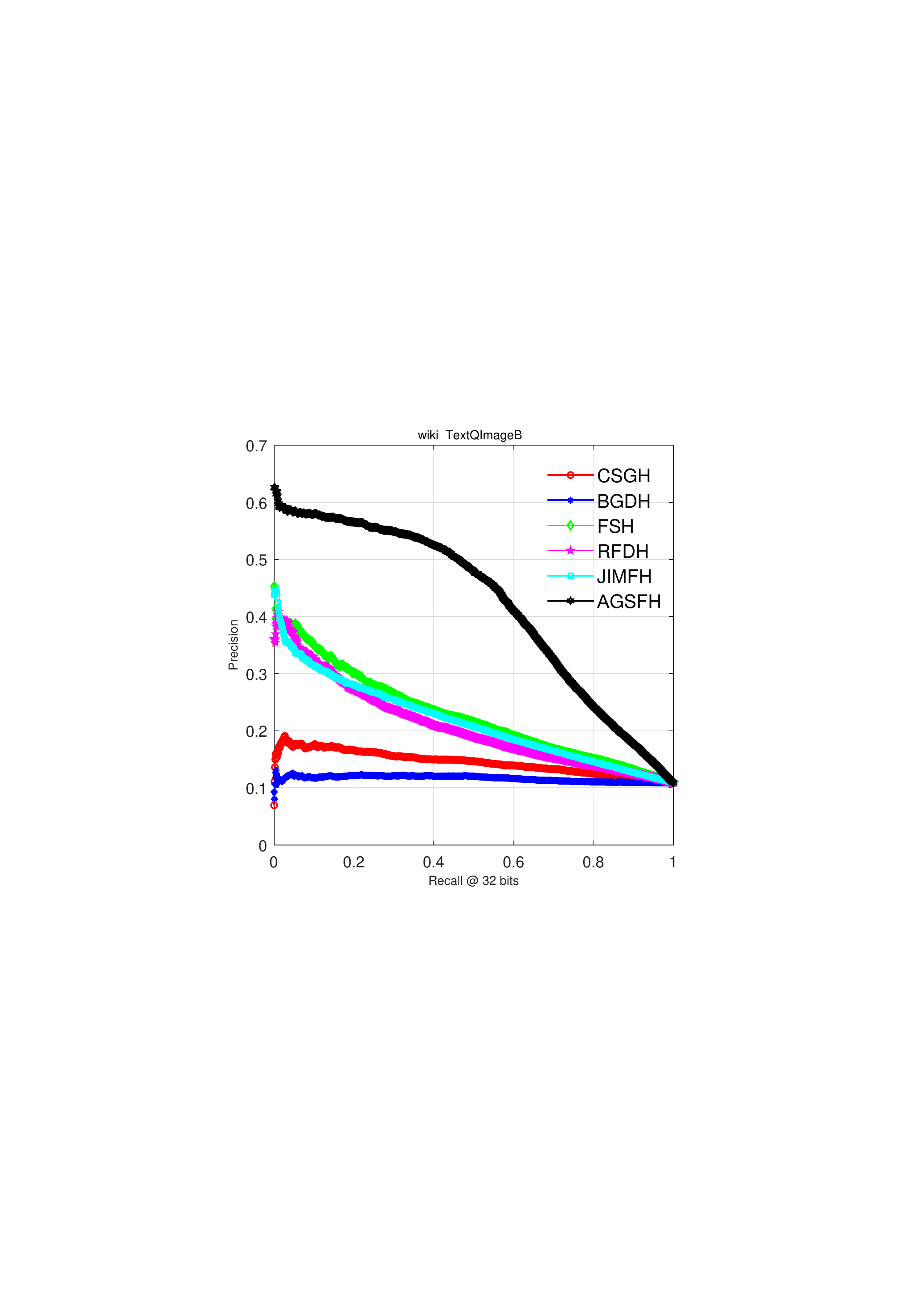}}
\caption{Precision-Recall Curves @ $32$ bits on three cross-modal benchmark datasets. (a) and (b) are Nuswide. (c) and (d) are MirFlickr25K. (e) and (f) are Wiki.}
\label{figure4}
\end{figure*}

\begin{figure*}[t]
\centering
\includegraphics[width=1.0\textwidth]{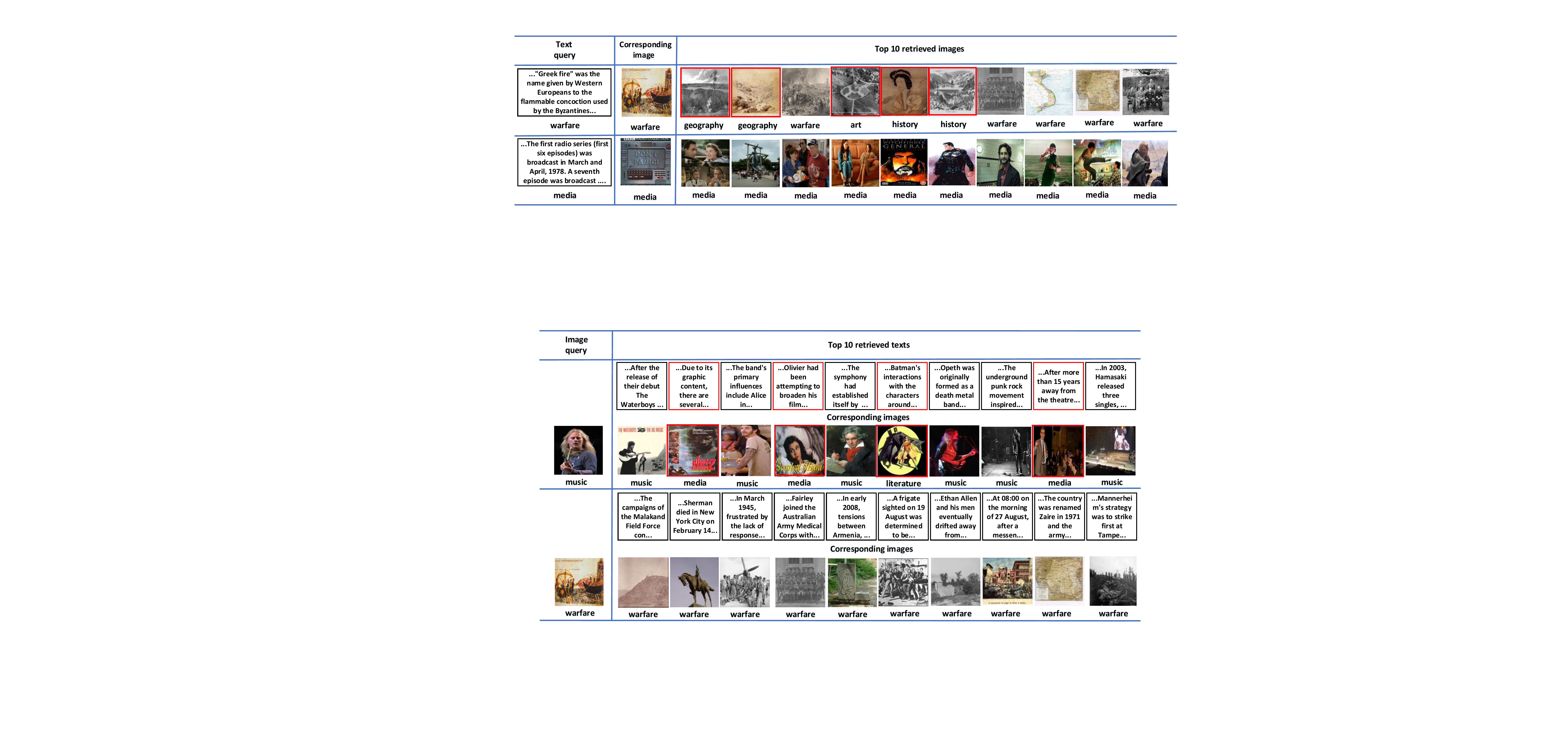}
\caption{Examples of image-based text retrieval on Wiki using AGSFH. For each image query (first column), we have obtained top $10$ retrieved texts (column $2$-$11$): top row demonstrates the retrieved texts, middle row demonstrates its corresponding images, bottom row demonstrates its corresponding class labels. The retrieved results irrelevant to the query in semantic categories are marked with the red box.}
\label{figure32}
\end{figure*}

\begin{figure*}[t]
\centering
\includegraphics[width=1.0\textwidth]{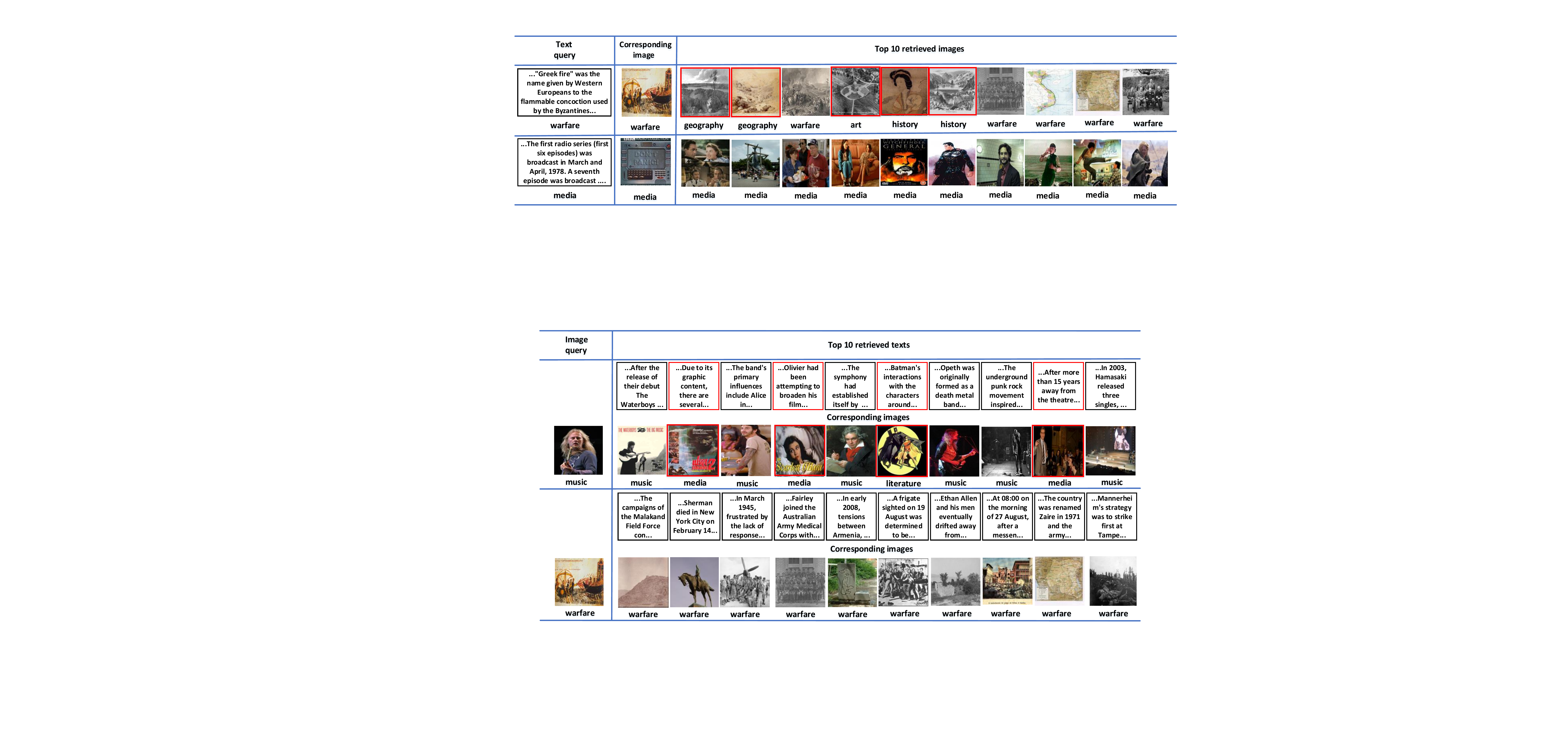}
\caption{Examples of text-based image retrieval on Wiki using AGSFH. For each text query (first column), the top $10$ retrieved images are shown in the third column: top row demonstrates the retrieved images, bottom row demonstrates its corresponding class labels. The second column shows the ground truth image corresponding to the text query. The retrieved results irrelevant to the query in semantic categories are marked with the red box.}
\label{figure31}
\end{figure*}

In this section, we show the comparison results of AGSFH with other state-of-the-art CMH approaches on three multi-modal benchmark databases. All the results are realized on a 64-bit Windows PC with a 2.20GHz i7-8750H CPU and 16.0GB RAM.

\subsection{Experimental Settings}

\subsubsection{Datasets}

We perform experiments on three multi-modal benchmarks datasets: Wiki \cite{jour9}, MIRFlickr25K \cite{jour10}, and NUS-WIDE \cite{jour11}. The details are given as follows.

Wiki \cite{jour9} dataset is comprised of $2,866$ image-text pairs, which are sampled from Wikipedia. The dataset contains ten semantic categories such that each pair belongs to one of these categories. Each image is represented by a $128$-dimensional bag-of-words visual vector, and each text is represented by a $10$-dimensional word-vectors. We randomly select $2,173$ image-text pairs from the original dataset as the training set (which is used as the database in the retrieval task), and the remaining $693$ image-text pairs are used as the query set.

MIRFlickr25K \cite{jour10} is a dataset containing $25,000$ image-text pairs in $24$ semantic categories. In our experiment, we remove pairs whose textual tags appear less than $20$ times. Accordingly, we get $20,015$ pairs in total, in which the $150$-dimensional histogram vector is used to represent the image, and the $500$-dimensional latent semantic vector is used to represent the text. We randomly select $2,000$ image-tag pairs as the query set and the remaining $18,015$ pairs as the database in the retrieval task. We also randomly selected $5000$ image-text pairs from the remaining $18,015$ pairs as the training set to learn the hash model.

NUS-WIDE \cite{jour11} is a multi-label image dataset that contains $269,648$ images with $5,018$ unique tags in $81$ categories. Similar to \cite{jour11}, we sample the ten largest categories with the corresponding $186,577$ images. We then randomly split this dataset into a training set with $184,577$ image-text pairs and a query set with $2,000$ image-text pairs. We should note that each image is annotated by a $500$-dimensional bag-of-words visual vector, and texts are represented by a $1,000$-dimensional index vector. Besides, we randomly selected $5000$ image-text pairs as the training set, which is used to learn the hash model.

\subsubsection{Compared Methods}

Our model is a shallow unsupervised CMH learning model. In order to conduct a fair comparison, we conduct several experiments and compare the AGSFH method with other shallow unsupervised CMH baselines: CSGH \cite{proceeding14}, BGDH \cite{proceeding16}, FSH \cite{proceeding17}, RFDH \cite{jour5}, JIMFH \cite{jour23}. To implement the baselines, we use and modify their provided source codes. We note that the results are averaged over five runs.

For further showing the superiority of performance of our proposal, we compare our work with some deep learning methods, i.e., DCMH \cite{ proceeding23}, DDCMH \cite{ proceeding24}, and DBRC \cite{jour14}, on MirFlickr25K in Section \ref{sec4.5}.

\subsubsection{Implementation Details}

We utilize the public codes with the fixed parameters in the corresponding papers to perform the baseline CMH methods. Besides, in our experiments, the hyper-parameter $\lambda$ is a trade-off hyper-parameter, which is set as $300$ on three datasets. The weight controller hyper-parameters $\gamma_1$ and $\gamma_3$ are both set to be $0.01$. The regularization hyper-parameter $\gamma_2$ is set to $10$ for better performance. The number of clusters $C$ is set as $60$ in all our experiments. For training efficiency, the number of anchor points $P$ is set to $900$ for all the datasets. The number of neighbor points $k$ is set to $45$.

\subsubsection{Evaluation Criteria}

We adopt three standard metrics to show the retrieval performance of AGSFH: mean average precision (MAP), topN-precision curves, and precision-recall curves. The MAP metric is calculated by the top $50$ returned retrieval samples. A more detailed introduction of these evaluation criteria is contained in \cite{proceeding30}. Two typical cross-modal retrieval tasks are evaluated for  our approach: image-query-text task (i.e. I$\rightarrow$T) and text-query-image task (i.e. T$\rightarrow$I).

\subsection{Experimental Analysis}
\renewcommand{\arraystretch}{1.0}
\begin{table*}[htb]
  \centering
  \tiny
  \setlength{\tabcolsep}{5pt}
  \caption{Training Time (in Seconds) of Different Hashing Methods on MIRFlickr25K and NUS-WIDE.}
  \label {Table.4}
  \resizebox{0.9\textwidth}{!}{
    \begin{tabular}{cccccccccc}
    \hline
    \multirow{2}{*}{Methods}
    &\multicolumn{4}{c}{MIRFlickr25K} &\multicolumn{4}{c}{NUS-WIDE}\cr
    \cmidrule(lr){2-5} \cmidrule(lr){6-9}
    &16 bits&32 bits&64 bits&128 bits&16 bits&32 bits&64 bits&128 bits\cr
    \hline
    CSGH &23.4576 &22.1885 &22.6214 &23.2040 &434.8411 &438.4742 &446.0972 &455.3075
\cr
    BGDH &0.6640 &0.7100 &1.0135 &1.4828 &8.8514 &10.0385 &12.6808 &16.6654
\cr
    FSH &5.2739 &5.2588 &5.6416 &6.2217 &75.6872 &77.9825 &83.4182 &89.5166
\cr
    RFDH &61.9264 &125.8577 &296.4683 &801.0725 &2132.1634 &4045.4016 &8995.0049 &22128.4188
\cr
    JIMFH &2.1706 &2.3945 &3.0242 &4.3020 &23.3772 &26.9233 &32.0088 &43.6795
\cr
    {\bf AGSFH} &79.2261 &80.3913 &136.8426 &79.4166 &638.8646 &871.4929 &794.8452 &961.3188
\cr
    \hline
    \end{tabular}}
\end{table*}

\begin{table*}[htb]
  \centering
  \tiny
  \setlength{\belowcaptionskip}{5pt}
  \caption{Testing Time (in Seconds) of Different Hashing Methods on MIRFlickr25K and NUS-WIDE.}
  \label {Table.5}
  \resizebox{0.9\textwidth}{!}{
    \begin{tabular}{ccccccccccc}
    \hline
    \multirow{2}{*}{Tasks}& \multirow{2}{*}{Methods} &
    \multicolumn{4}{c}{MIRFlickr25K} &\multicolumn{4}{c}{NUS-WIDE}\cr
    \cmidrule(lr){3-6} \cmidrule(lr){7-10}
    &&16 bits&32 bits&64 bits&128 bits&16 bits&32 bits&64 bits&128 bits\cr
    \hline
    \multirow{6}{*}{I$\rightarrow$T}
    &CSGH &0.1935 &0.2320 &0.3626 &1.0183 &3.2961 &4.4186 &6.8955 &12.4145
\cr
    &BGDH &0.1662 &0.2322 &0.3610 &1.0342 &3.3980 &4.4255 &6.9406 &12.4911
\cr
    &FSH &0.1984 &0.2464 &0.3555 &1.0110 &3.4221 &4.5401 &7.1110 &12.6618
\cr
    &RFDH &0.1941 &0.2522 &0.3672 &1.0303 &3.4753 &4.4805 &6.9003 &12.6593
\cr
    &JIMFH &0.1782 &0.2405 &0.3742 &1.0076 &3.2559 &4.5116 &7.0505 &12.6736
\cr
    &{\bf AGSFH} &0.1528 &0.2403 &0.3784 &0.9411 &3.9483 &4.5114 &6.9818 &12.6628
\cr
  \hline
    \multirow{6}{*}{T$\rightarrow$I}
    &CSGH &0.1562 &0.2133 &0.3670 &1.0157 &3.2620 &4.4325 &6.9195 &12.3532
\cr
    &BGDH &0.1442 &0.2194 &0.3464 &1.0355 &3.4354 &4.4582 &6.8954 &12.3665
\cr
    &FSH &0.1533 &0.2130 &0.3639 &1.0101 &3.4172 &4.4222 &6.9067 &12.422
\cr
    &RFDH &0.1455 &0.2263 &0.3584 &1.0281 &3.3462 &4.4351 &6.9336 &12.4925
\cr
    &JIMFH &0.1527 &0.2101 &0.3444 &1.0105 &3.4214 &4.4370 &6.9354 &12.4270
\cr
    &{\bf AGSFH} &0.1371 &0.2224 &0.3532 &0.9485 &3.4478 &4.4860 &6.9820 &12.4307
\cr
    \hline
    \end{tabular}}
\end{table*}

\subsubsection{Retrieval Performance}

In Table.\ref{Table.1}, \ref{Table.2} and \ref{Table.3}, the MAP evaluation results are exhibited on all three data sets, i.e., Wiki, MIRFlickr25K, and NUS-WIDE. From these figures, for all cross-modal tasks (i.e., image-query-text and text-query-image), AGSFH achieves a significantly better result than all comparison methods On Wiki and MIRFlickr25K. Besides, on NUS-WIDE, AGSFH also achieves comparable performance with JIMFH and FSH on image-query-text task, outperforming other remaining comparison methods, and shows significantly better performance than all comparison methods on text-query-image task. The superiority of AGSFH can be attributed to their capability to reduce the effect of information loss, which directly learns the intrinsic anchor graph to exploit the geometric property of underlying data structure across multiple modalities, as well as avoids the large quantization error. Besides, AGSFH learns the structure of the intrinsic anchor graph adaptively to make training instances clustered into semantic space and preserves the anchor fusion affinity into the common binary Hamming space to guarantee that binary data can preserve the semantic relationship of training instances in semantic space. The above observations show the effectiveness of the proposed AGSFH. We can find another observation that the average increase of text-query-image retrieval task is larger than the average increase of image-query-text retrieval task. This is because that image includes more noise and outliers than text.

The topN-precision curves with code length $32$ bits on all three data sets are demonstrated in Figs. \ref{figure3}. From the experimental results, the topN-precision results are in accordance with mAP evaluation values. AGSFH has better performance than others comparison methods for cross-modal hashing search tasks on Wiki and MIRFlickr25K. Furthermore, on NUS-WIDE, AGSFH demonstrates comparable performance with JIMFH and FSH on image-query-text task, outperforming other remaining comparison methods, and shows significantly better performance than all comparison methods on text-query-image task. In the retrieval system, we focus more on the front items in the retrieved list returned by the search algorithm. Hence, AGSFH achieves better performance on all retrieval tasks in some sense.

From Table.\ref{Table.1}, \ref{Table.2}, \ref{Table.3} and Figs. \ref{figure3}-\ref{figure4}, AGSFH usually demonstrates large margins on performance when compared with other methods about cross-modal hashing search tasks on Wiki and MIRFlickr25K. At the same time, on NUS-WIDE, AGSFH also exhibits comparable performance with JIMFH and FSH on image-query-text task, better than other remaining comparison methods, and shows significantly better performance than all comparison methods on text-query-image task. We consider two possible reasons for explaining this phenomenon. Firstly, AGSFH directly learns the intrinsic anchor graph to exploit the geometric property of underlying data structure across multiple modalities, as well as avoids the large quantization error, which can be robust to data (i.e., image and text data) outliers and noises. Thus, AGSFH can achieve performance improvement. Secondly, AGSFH adjusts the structure of the intrinsic anchor graph adaptively to make training instances clustered into semantic space and preserves the anchor fusion affinity into the common binary Hamming space. This process can extract the high-level hidden semantic relationship in the image and text.  Therefore, AGSFH could find the common semantic clusters that reflect the semantic properties more precisely. On the consequences, under the guidance of the common semantic clusters, AGSFH can achieve better performance on cross-modal retrieval tasks.

The precision-recall curves with the code length of $32$ bits are also demonstrated in Fig. \ref{figure4}. By calculating the area under precision-recall curves, we can discover that AGSFH outperforms comparison methods for cross-modal hashing search tasks on Wiki and MIRFlickr25K. In addition, on NUS-WIDE, AGSFH has comparable performance with JIMFH and FSH on image-query-text task, better than other remaining comparison methods, and shows significantly better performance than all comparison methods on text-query-image task.

Moreover, two examples of image-query-text retrieval task and text-query-image retrieval task over Wiki dataset by AGSFH are performed. The results are showed in Fig. \ref{figure32} and Fig. \ref{figure31}. Fig.\ref{figure32} reports that the classes of the second, fourth, sixth, and ninth searched texts are different from the image query belonging to the music category (in the first row). This is because the visual appearance of the image query in the music category is very similar to the incorrect images in the retrieved results. The example of text retrieving images is displayed in Fig.\ref{figure31}. As we can see, the classes of the first, second, fourth, fifth, and sixth retrieved images by AGSFH are different from the text query belonging to the warfare category. We can find that the visual appearance of the text query's corresponding images is very similar to the incorrect retrieved images.

\subsubsection{Training Time}

We check the time complexity of all the compared CMH methods in the training phase on two large benchmark datasets MIRFlickr25K and NUSWIDE, showing the efficiency of our approach AGSFH. Table. \ref{Table.4} illustrates the training time comparison result. From the table, compared with CSGH, BGDH, FSH, and JIMFH, we can find our AGSFH takes a little longer time than these hashing methods but spends a shorter time to RFDH with different code lengths from $16$ bits to $128$ bits. On the other hand, our approach AGSFH has nearly no change in acceptable training time with different code lengths from $16$ bits to $128$ bits, showing the superiority of AGSFH in time complexity.

\subsubsection{Testing Time}

We compare the testing time with all the compared CMH methods in Table. \ref{Table.5}. From this table, we observe that the compared cross-modal hashing methods take nearly identical time for retrieval as well as our AGSFH.

\begin{table}[htb]
  \centering
  \tiny
  \setlength{\belowcaptionskip}{5pt}
  \caption{MAP Comparison of AGSFH and Deep Cross-modal Hashing Methods on MirFlickr25K.}
  \label {Table.6}
  \resizebox{0.48\textwidth}{!}{
    \begin{tabular}{cccccc}
    \hline
    \multirow{2}{*}{Tasks}& \multirow{2}{*}{Methods} &
    \multicolumn{3}{c}{MirFlickr25K} \cr\cline{3-5}
    &&16 bits&32 bits&64 bits\cr
    \hline
    \multirow{4}{*}{I$\rightarrow$T}
    &DCMH &0.7410 &0.7465 &0.7485
\cr
   &DDCMH &{\bf 0.8208} &{\bf 0.8434} &{\bf 0.8551}
\cr
    &DBRC &0.5922 &0.5922 &0.5854
\cr
    &{\bf AGSFH} &0.7239 &0.7738 &0.8135

\cr
\hline
    \multirow{4}{*}{T$\rightarrow$I}
    &DCMH &{\bf 0.7827} &0.7900 &0.7932
\cr
   &DDCMH &0.7731 &0.7766 &0.7905
\cr
    &DBRC &0.5938 &0.5952 &0.5938
\cr
    &{\bf AGSFH} &0.7505 &{\bf 0.8009} &{\bf 0.8261}
\cr
    \hline
    \end{tabular}}
\end{table}

\subsubsection{Comparison with Deep Cross-modal Hashing} \label{sec4.5}

We further compare AGSFH with three state-of-the-art deep CMH methods, i.e., DCMH \cite{ proceeding23}, DDCMH \cite{ proceeding24}, and DBRC \cite{jour14}, on MirFlickr25K dataset. We utilize the deep image features, extracted by the CNN-F network \cite{ proceeding31} with the same parameters in \cite{ proceeding23} and the original text features for evaluating the MAP results of AGSFH. In addition, the performance of DCMH, DDCMH, and DBRC (without public code) is presented using the results in its paper.

The results of the experiments are shown in Table. \ref{Table.6}. From Table. \ref{Table.6}, we could find that AGSFH outperforms DCMH and DBRC on both the Image-to-Text and Text-to-Image tasks, but AGSFH is inferior to DDCMH on the Image-to-Text task and is better than it on the Text-to-Image task. From these observations, we conduct the conclusion that AGSFH is not a deep hashing model, yet it can outperform some of the state-of-the-art deep cross-modal hashing methods, i.e., DCMH, DBRC. As the DDCMH shows significantly superior results than the proposed AGSFH, our AGSFH contains many strengths over DDCMH, like simple structure, easy to implement, time-efficient, strong interpretability, and so on. It further verifies the effectiveness of the proposed model AGSFH.

\subsection{Empirical Analysis}
\begin{figure}[ht]
\centering
\includegraphics[width=0.5\columnwidth]{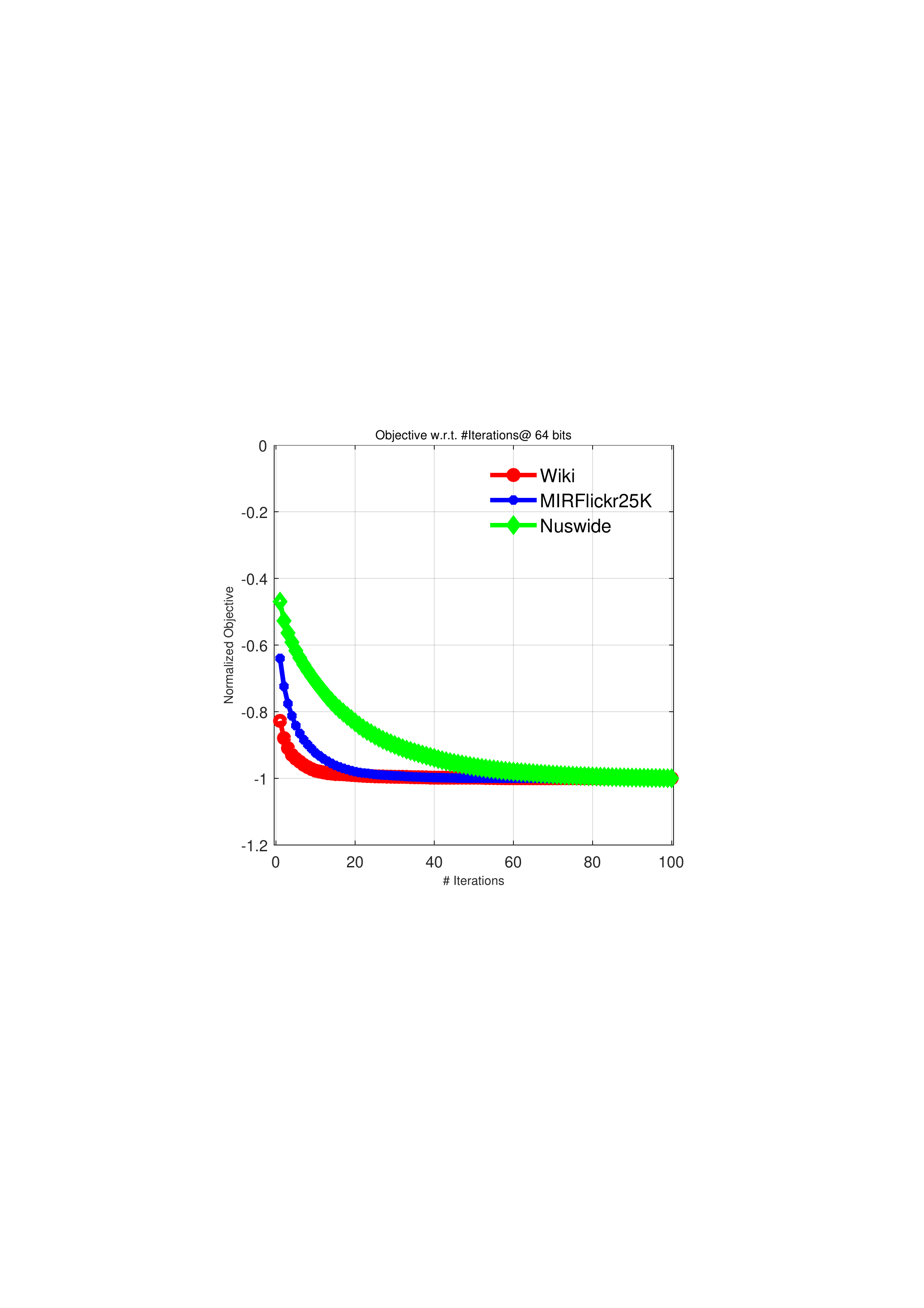}
\caption{Convergence analysis.}
\label{figure5}
\end{figure}

\begin{figure*}[ht]
\centering
\subfigure[ ]{
  \includegraphics[width=0.45\columnwidth]{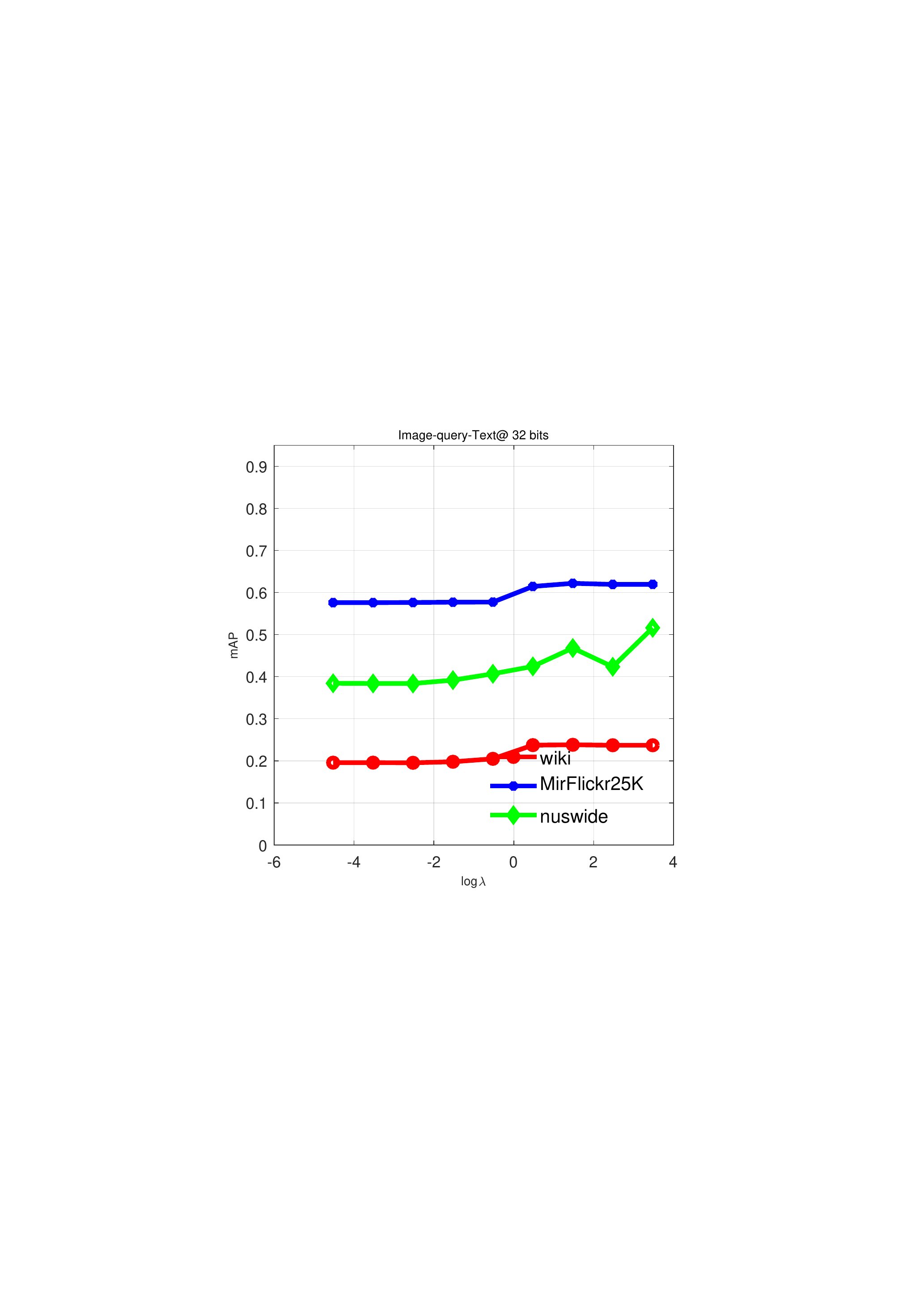}}
\centering
\subfigure[ ]{
  \includegraphics[width=0.45\columnwidth]{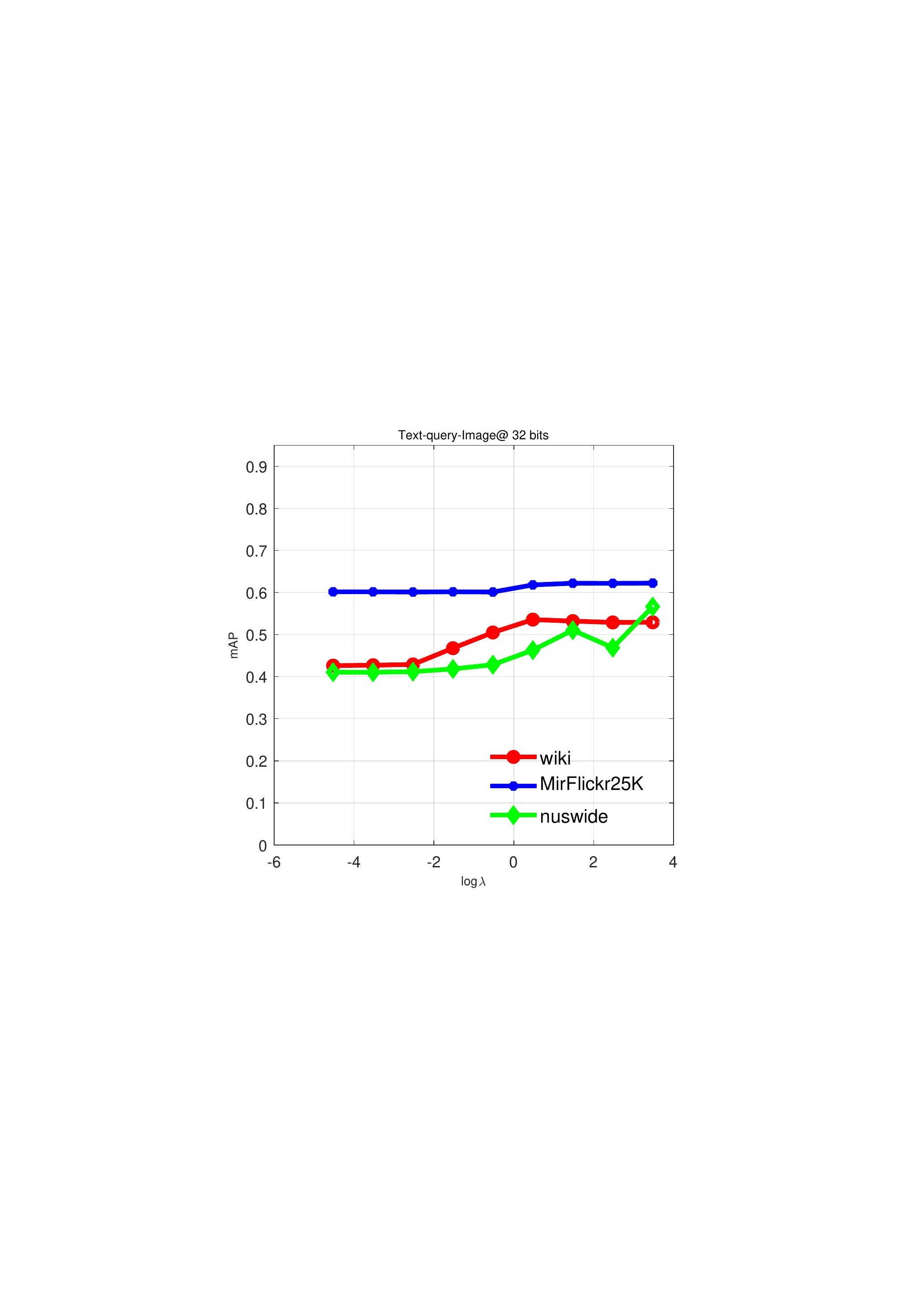}}
\centering
\subfigure[ ]{
  \includegraphics[width=0.45\columnwidth]{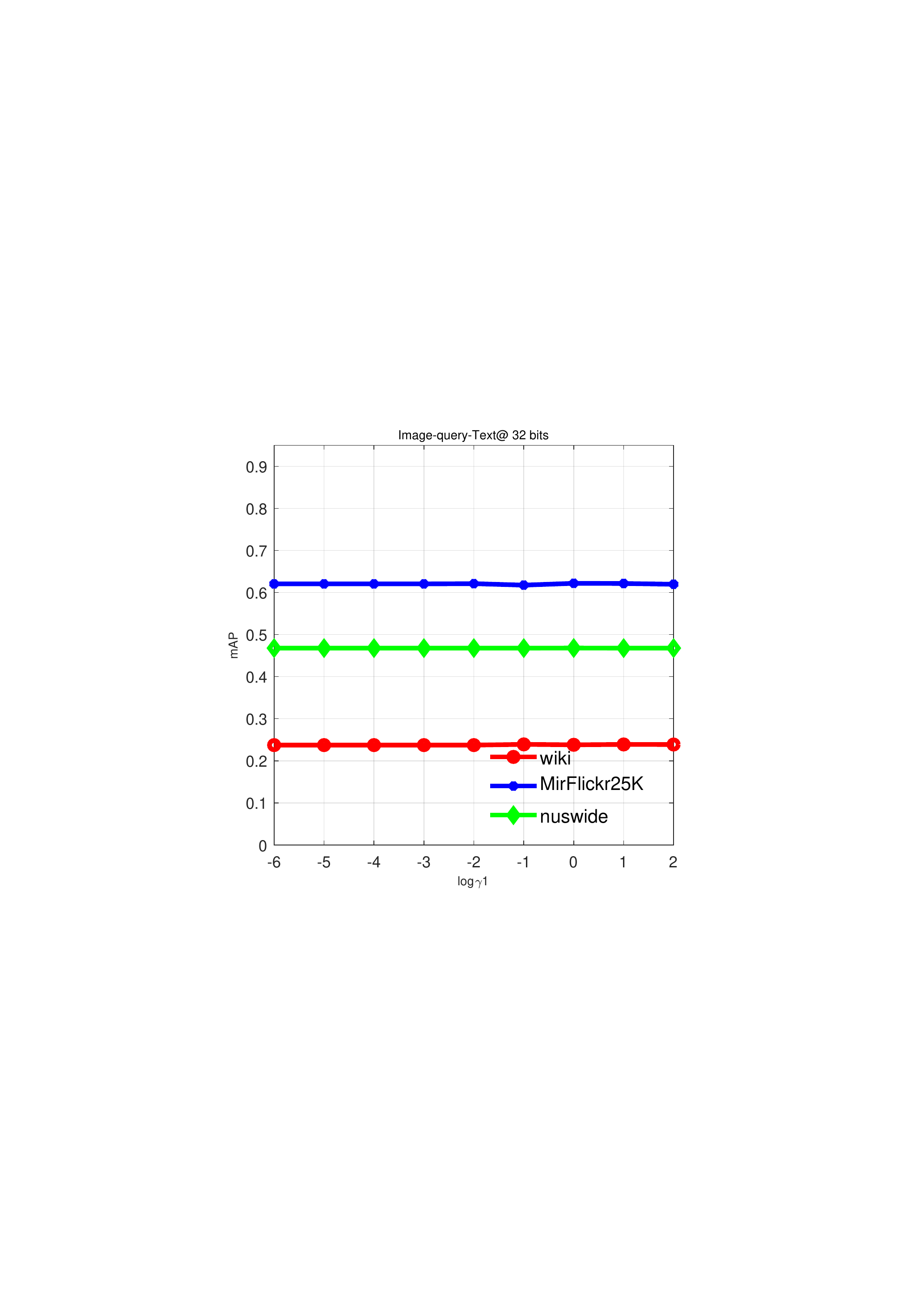}}
\centering
\subfigure[ ]{
  \includegraphics[width=0.45\columnwidth]{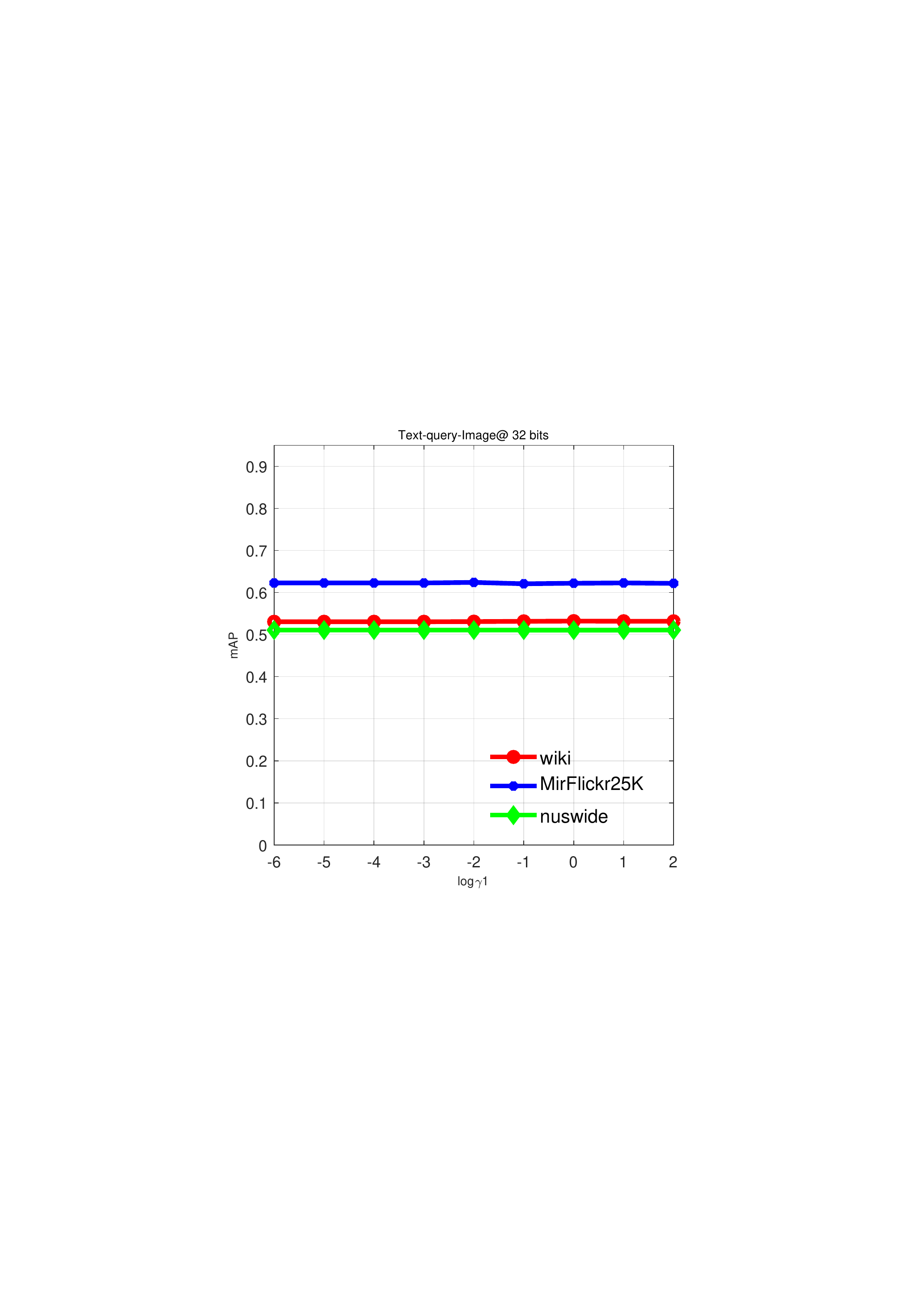}}
\caption{MAP values versus hyper-parameters. (a) and (b) are $\lambda$. (c) and (d) are $\gamma_1$.}
\label{figure6}
\end{figure*}

\begin{figure*}[ht]
\centering
\subfigure[ ]{
  \includegraphics[width=0.45\columnwidth]{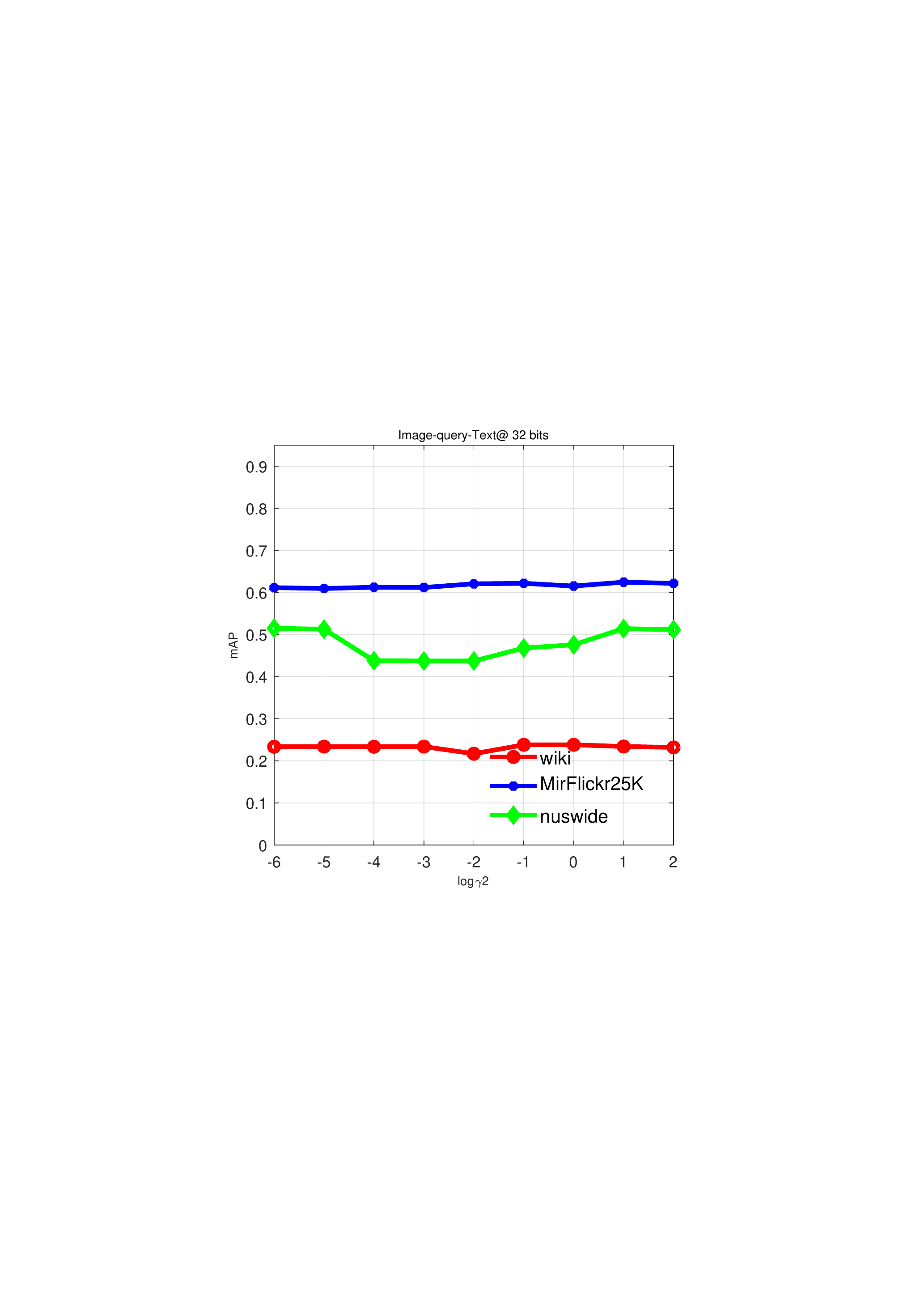}}
\centering
\subfigure[ ]{
  \includegraphics[width=0.45\columnwidth]{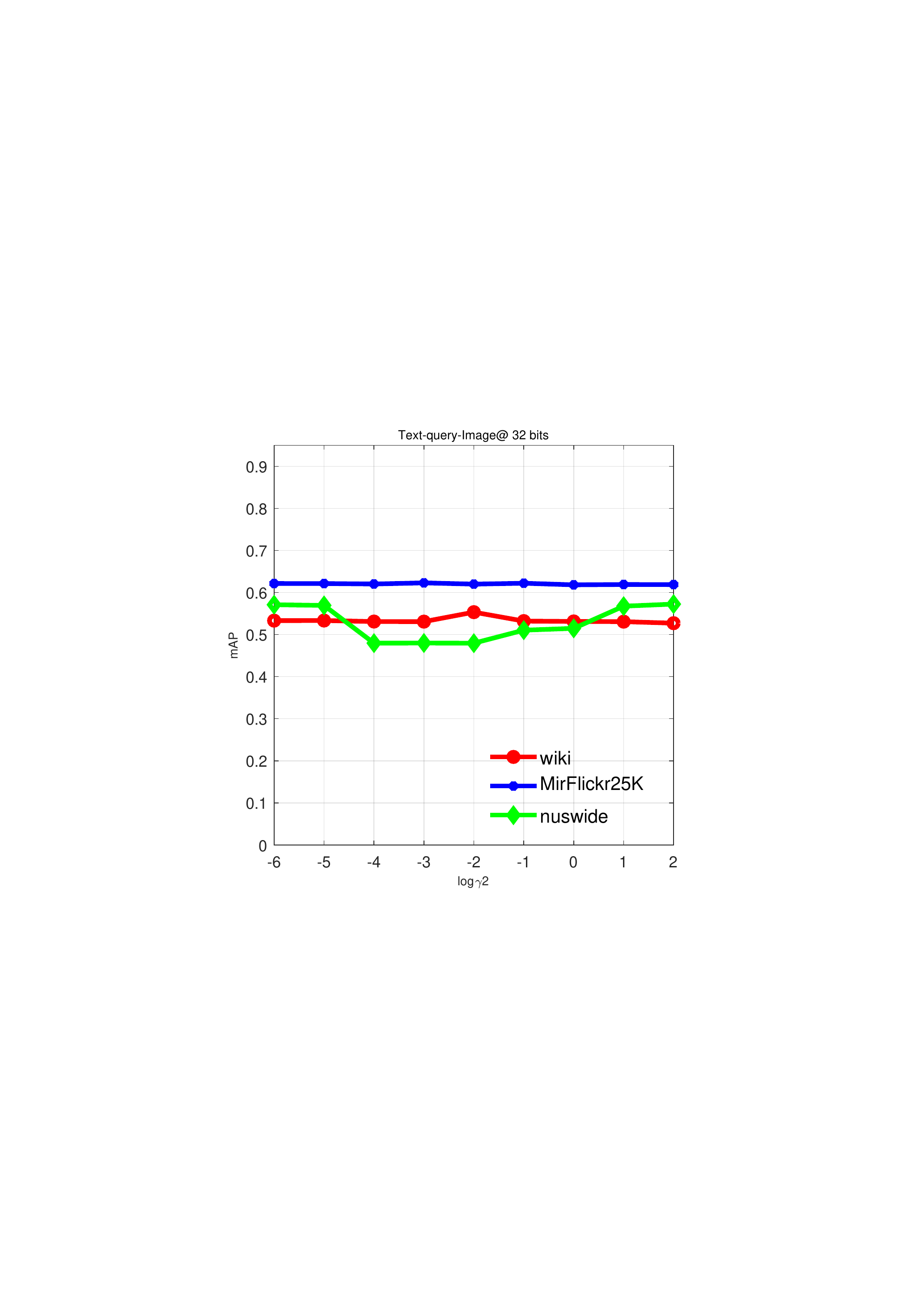}}
\centering
\subfigure[ ]{
  \includegraphics[width=0.45\columnwidth]{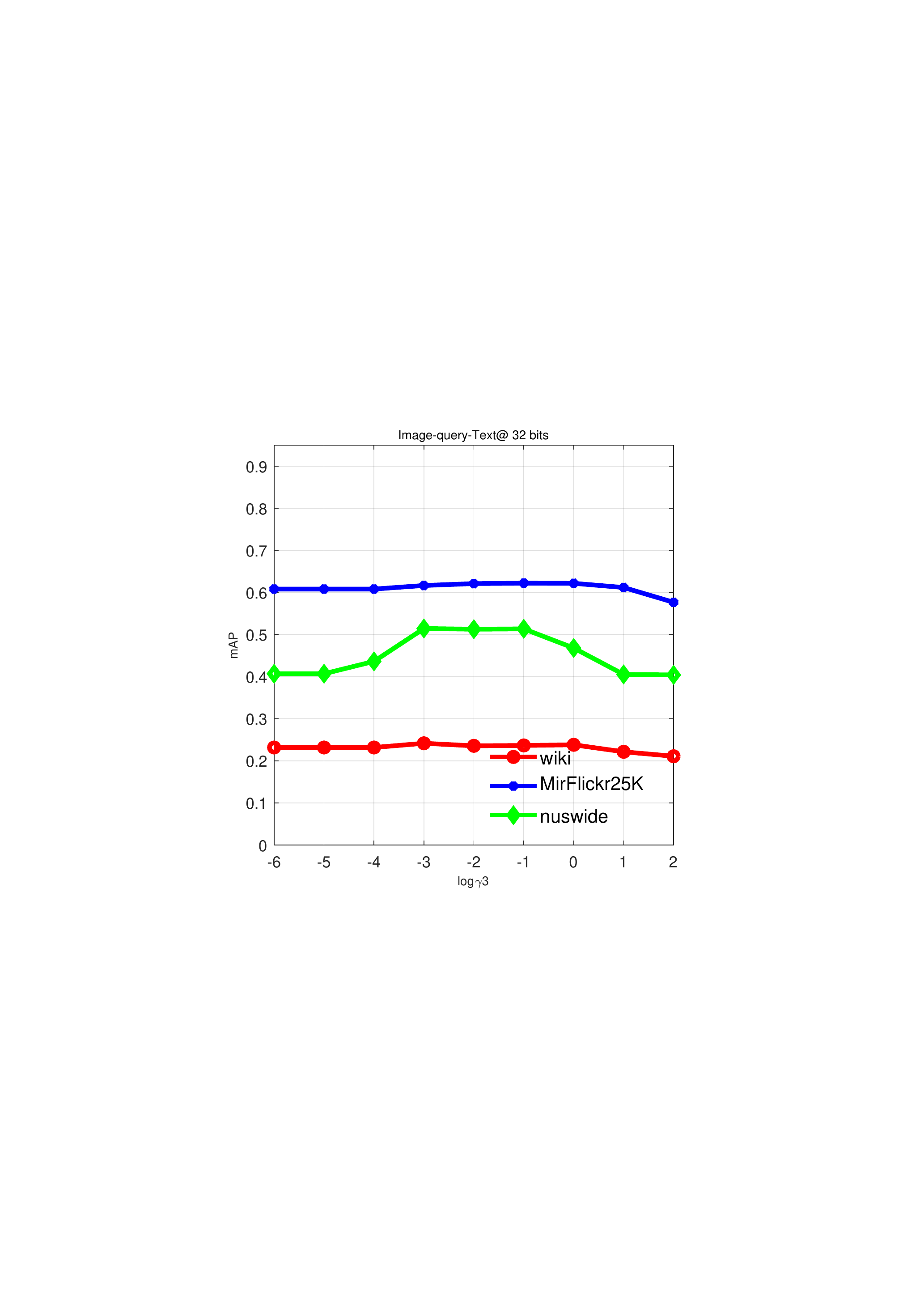}}
\centering
\subfigure[ ]{
  \includegraphics[width=0.45\columnwidth]{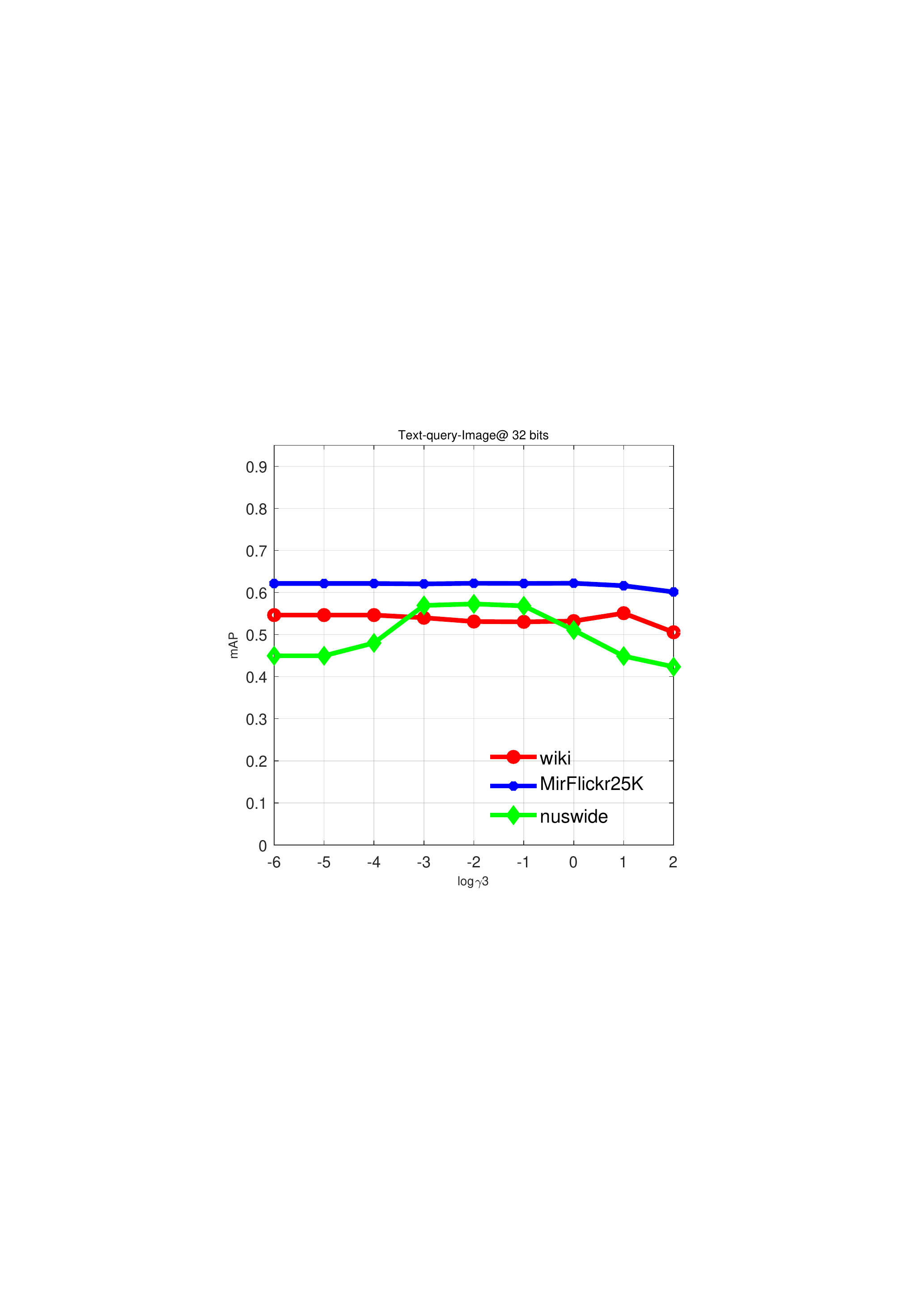}}
\caption{MAP values versus hyper-parameters. (a) and (b) are $\gamma_2$. (c) and (d) are $\gamma_3$.}
\label{figure7}
\end{figure*}

\begin{figure*}[ht]
\centering
\subfigure[ ]{
  \includegraphics[width=0.45\columnwidth]{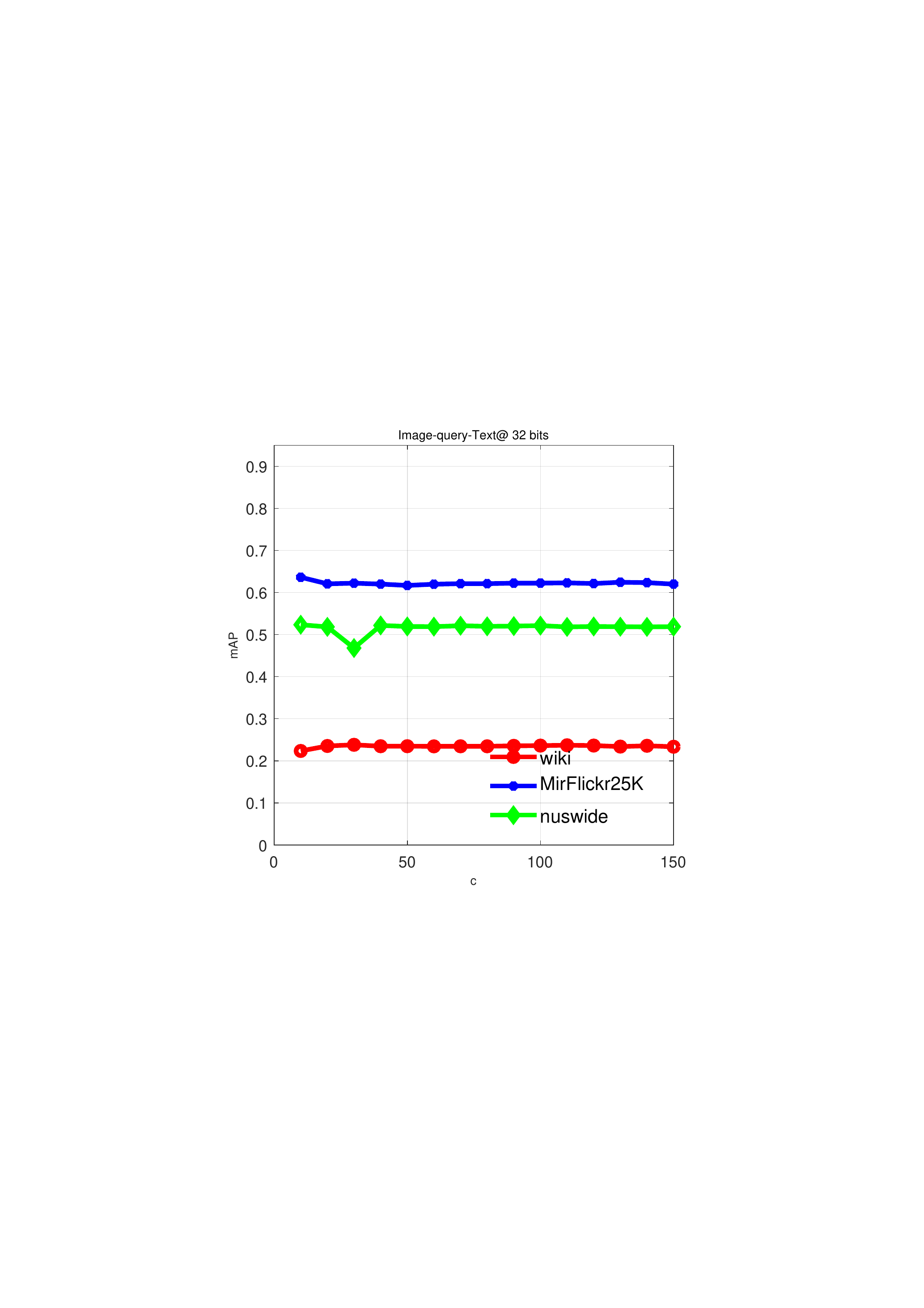}}
\centering
\subfigure[ ]{
  \includegraphics[width=0.45\columnwidth]{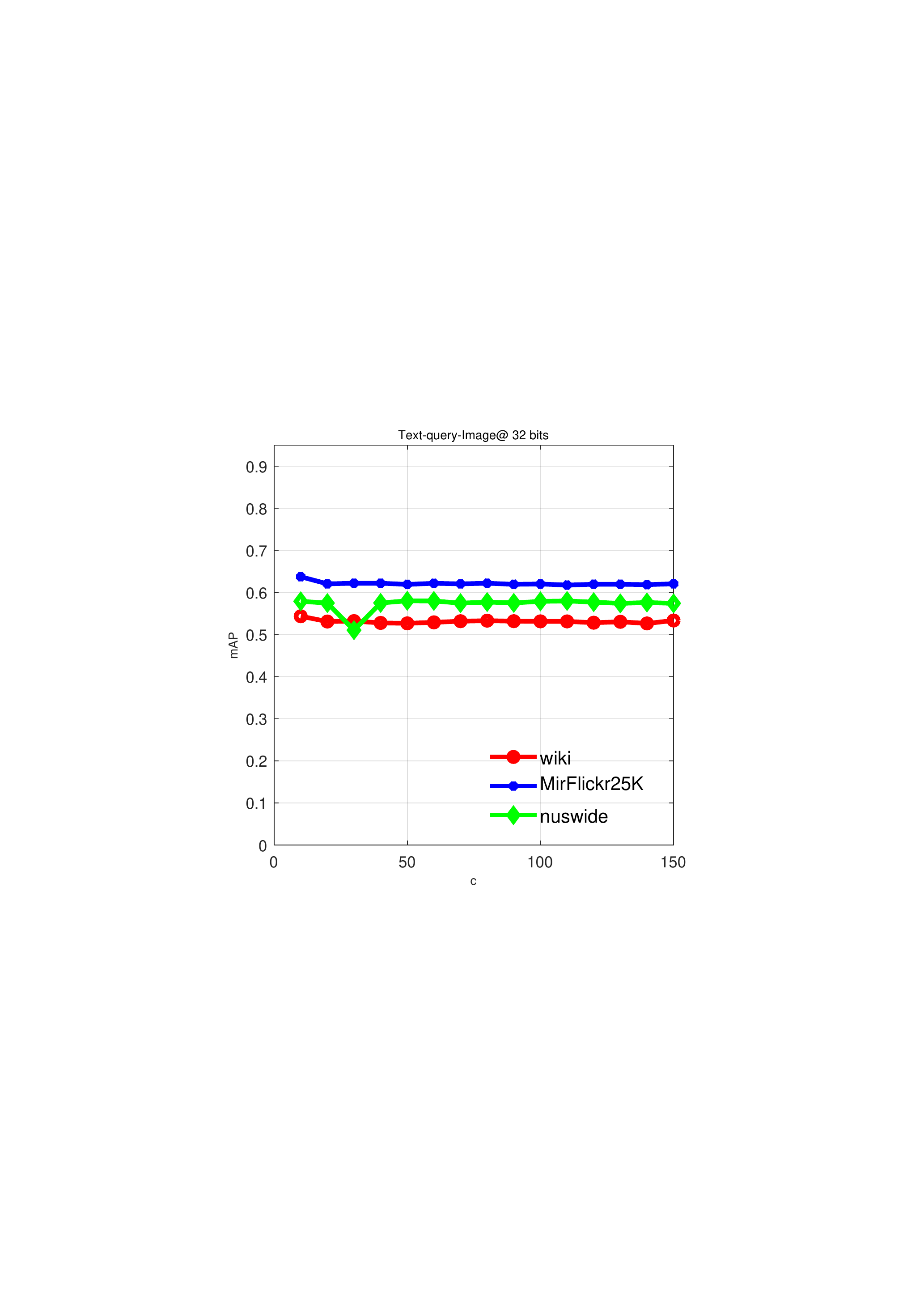}}
\centering
\subfigure[ ]{
  \includegraphics[width=0.45\columnwidth]{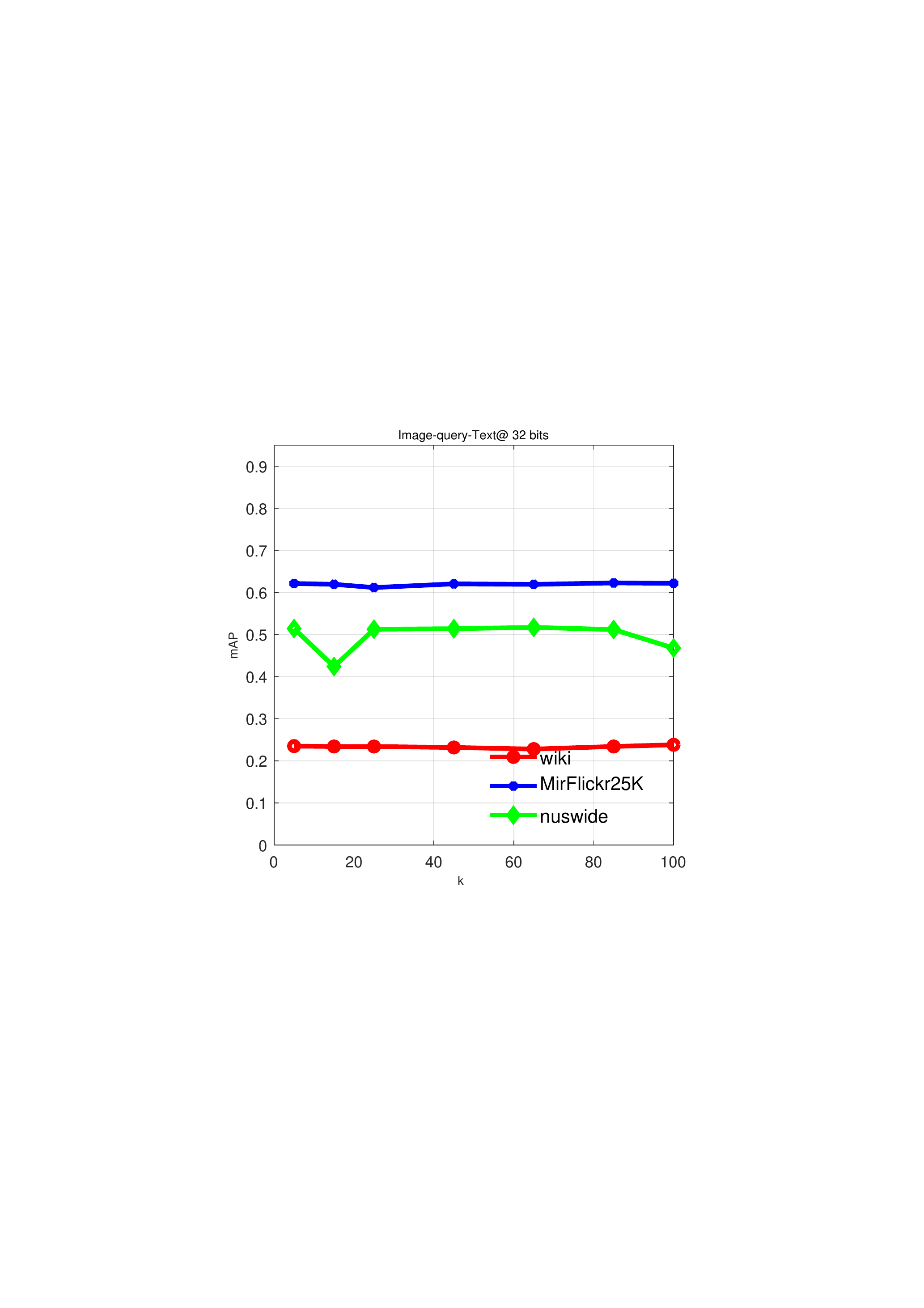}}
\centering
\subfigure[ ]{
  \includegraphics[width=0.45\columnwidth]{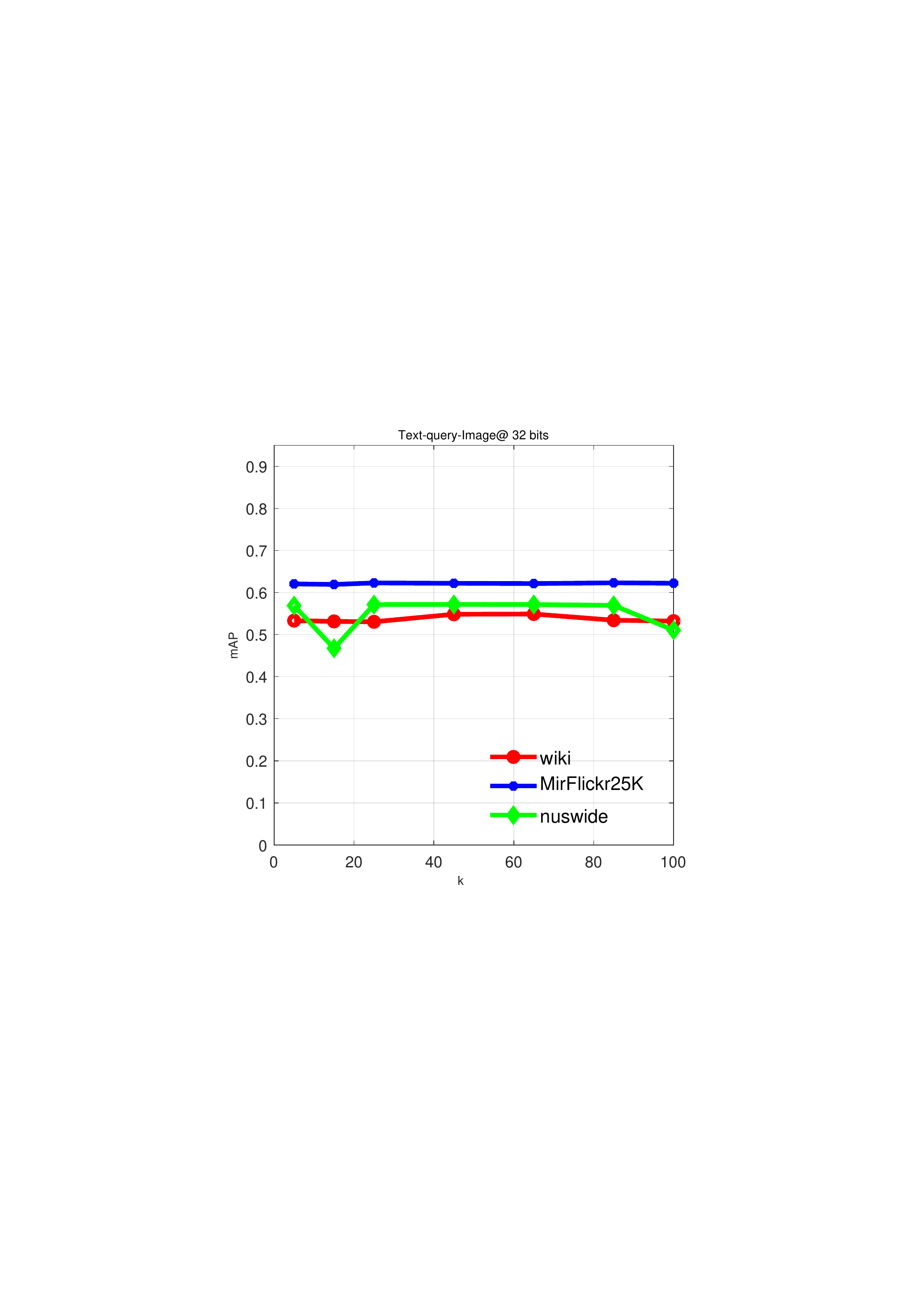}}
\caption{MAP values versus hyper-parameters. (a) and (b) are $c$. (c) and (d) are $k$.}
\label{figure8}
\end{figure*}

\begin{figure}[ht]
\centering
\subfigure{
  \includegraphics[width=0.45\columnwidth]{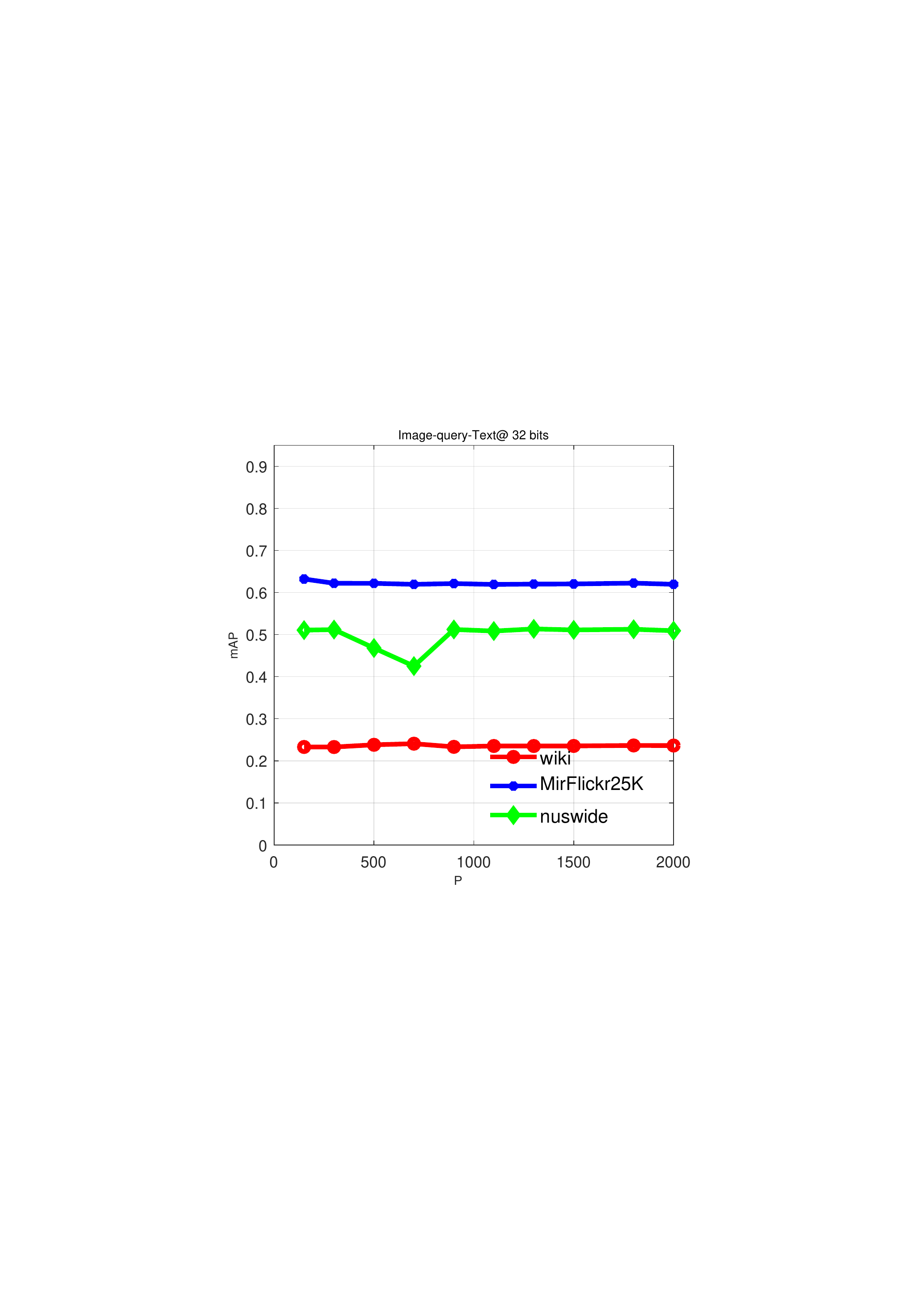}}
\centering
\subfigure{
  \includegraphics[width=0.45\columnwidth]{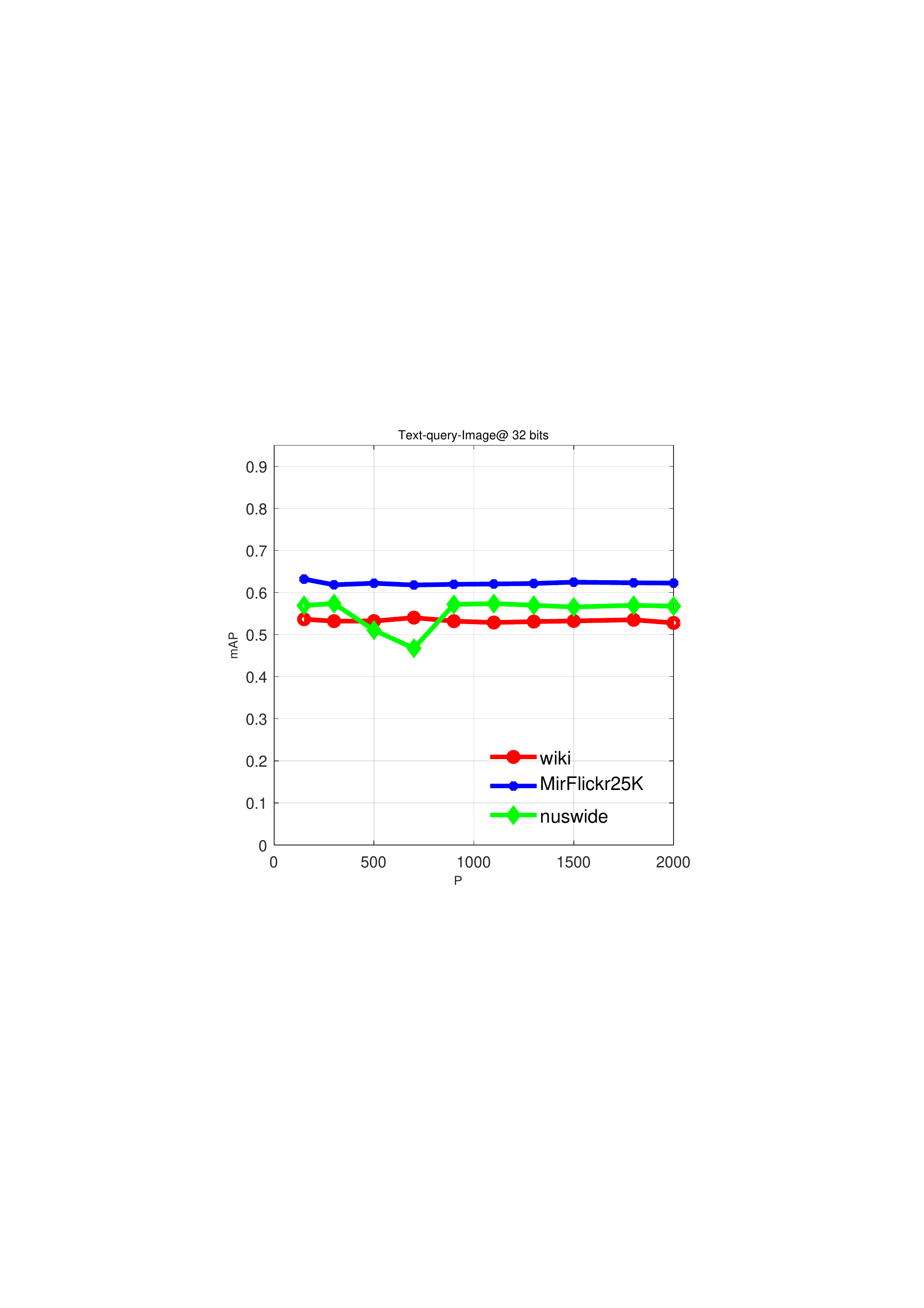}}
\caption{MAP values versus hyper-parameter $P$.}
\label{figure9}
\end{figure}

\subsubsection{Convergence} \label{Convergences}

We empirically validate the convergence property of the proposed AGSFH. Specifically, we conduct the empirical analysis of the value for the objective on all three datasets with the fixed $64$ bits hash code. Fig. \ref{figure5} illustrates the convergence curves, where we have normalized the value of the objective by the number of training data and the maximum value for the objective. From the figure Fig. \ref{figure5}, we can easily find that the objective value of AGSFH decreases sharply with less than $40$ iterations on all three datasets and has no significant change after several iterations. The result shows the fast convergence property of Algorithm \ref{alg:GSFH}.

\subsubsection{Ablation Experiments Analysis}
\begin{table}[htb]
  \centering
  \tiny
  \setlength{\belowcaptionskip}{5pt}
  \caption{Ablation Results of AGSFH on MirFlickr25K.}
  \label {Table.7}
  \resizebox{0.48\textwidth}{!}{
    \begin{tabular}{ccccccc}
    \hline
    \multirow{2}{*}{Tasks}& \multirow{2}{*}{Methods} &
    \multicolumn{3}{c}{MirFlickr25K} \cr\cline{3-6}
    &&16 bits&32 bits&64 bits&128 bits\cr
    \hline
    \multirow{3}{*}{I$\rightarrow$T}
    &AGSFH\_appro &0.6249	&{\bf 0.6531}	&0.6660	&0.6688
\cr
   &AGSFH\_reg &0.6397	&0.6353	&0.6609	&{\bf 0.6709}
\cr
    &{\bf AGSFH} &{\bf 0.6421}	&0.6509	&{\bf 0.6702}	&0.6701
\cr
\hline
    \multirow{3}{*}{T$\rightarrow$I}
    &AGSFH\_appro &0.6419	&0.6720	&0.6990	&0.7095
\cr
   &AGSFH\_reg &0.6496	&0.6612	&0.7013	&0.7026
\cr
    &{\bf AGSFH} &{\bf 0.6676}	&{\bf 0.6745}	&{\bf 0.7101}	&{\bf 0.7163}
\cr
    \hline
    \end{tabular}}
\end{table}

To obtain a deep insight about AGSFH, we further performed ablation experiments analysis on MIRFlickr-25K dataset, including two variations of AGSFH, that is: $1)$ AGSFH\_appro and $2)$ AGSFH\_reg. Compared with AGSFH, AGSFH\_appro drops out the second term in Eq.(\ref{GSFH19}). This second term means that the learned adaptive anchor graph $\hat{S}$ should best approximate the fused anchor graph $\hat{A}$. In contrast, AGSFH\_reg discards the third term in Eq.(\ref{GSFH19}). The third term means adding $L_2$-norm regularization to smooth the elements of the learned anchor graph $\hat{S}$. The MAP results of AGSFH and its variations are shown in Table. \ref{Table.7}. From this table, we can find the following.

\begin{enumerate}[(1)]
\setlength{\listparindent}{2em}

  \item AGSFH outperforms AGSFH\_appro and AGSFH\_reg greatly, showing the effect of the second term and the third term in Eq.(\ref{GSFH19}).
  \item AGSFH\_appro has approximately equivalent performance over AGSFH\_reg. This phenomenon indicates that the second term and the third term in Eq.(\ref{GSFH19}) are both useful for performance improvements. They have different effect improvements of AGSFH.
\end{enumerate}

\subsubsection{Parameter Sensitivity Analysis}

In this part, we analyze the parameter sensitivity of two cross-modal retrieval tasks with various experiment settings in empirical experiments on all datasets. Our AGSFH involves seven model hyper-parameters: the trade-off hyper-parameter $\lambda$, the weight controller hyper-parameters $\gamma_1$ and $\gamma_3$, the regularization hyper-parameter $\gamma_2$, the number of clusters $C$, the number of anchors points $P$, and the number of neighbor points $k$.

We fix the length of binary codes as $32$ bits and fix one of these two hyper-parameters with others fixed to verify the superior and stability of our approach AGSFH on a wide parameter range. Fig. \ref{figure6}, \ref{figure7}, \ref{figure8}, \ref{figure9} presents the MAP experimental results of AGSFH. From these figures, relatively stable and superior performance to a large range of parameter variations is shown, verifying the robustness to some parameter variations.

\section{Conclusion}
In this paper, we propose a novel cross-modal hashing approach for efficient retrieval tasks across modalities, termed Anchor Graph Structure Fusion Hashing (AGSFH). AGSFH directly learns an intrinsic anchor fusion graph, where the structure of the intrinsic anchor graph is adaptively tuned so that the number of components of the intrinsic graph is exactly equal to the number of clusters. Based on this process, training instances can be clustered into semantic space. Besides, AGSFH attempts to directly preserve the anchor fusion affinity with complementary information among multi-modalities data into the common binary Hamming space, capturing intrinsic similarity and structure across modalities by hash codes. A discrete optimization framework is designed to learn the unified binary codes across modalities. Extensive experimental results on three public social datasets demonstrate the superiority of AGSFH in cross-modal retrieval.

\bibliographystyle{IEEEtran}
\bibliography{mybibfile}

\end{document}